\documentclass{article}

\usepackage{microtype}
\usepackage{graphicx}
\usepackage{subcaption}
\usepackage{booktabs} 
\usepackage{multirow}
\usepackage{hyperref}

\usepackage[accepted]{icml2026}

\usepackage{amsmath}
\usepackage{amssymb}
\usepackage{mathtools}
\usepackage{amsthm}

\usepackage[capitalize,noabbrev]{cleveref}

\theoremstyle{plain}
\newtheorem{theorem}{Theorem}[section]
\newtheorem{proposition}[theorem]{Proposition}

\theoremstyle{definition}

\theoremstyle{remark}

\usepackage[textsize=tiny]{todonotes}

\icmltitlerunning{KODA: Contrastive Representation Comparison and Alignment for Vision-Language Foundation Models}

\begin{document}

\twocolumn[
  \icmltitle{KODA: Contrastive Representation Comparison and Alignment for \\ Vision-Language Foundation Models}



  \icmlsetsymbol{equal}{*}

  \begin{icmlauthorlist} \icmlauthor{Youqi Wu}{cuhk} \icmlauthor{Mohammad Jalali}{cuhk} \icmlauthor{Farzan Farnia}{cuhk} \end{icmlauthorlist} \icmlaffiliation{cuhk}{Department of Computer Science and Engineering, The Chinese University of Hong Kong}
  \icmlcorrespondingauthor{Youqi Wu}{yqwu24@cse.cuhk.edu.hk}
  \icmlcorrespondingauthor{Mohammad Jalali}{mjalali24@cse.cuhk.edu.hk}
  \icmlcorrespondingauthor{Farzan Farnia}{farnia@cse.cuhk.edu.hk}

  \icmlkeywords{}

  \vskip 0.3in
]



\printAffiliationsAndNotice{}  

\begin{abstract}
Vision-language foundation models such as CLIP and SigLIP provide widely used representations for multimodal learning systems. While these models are typically compared through downstream performance, such evaluations often do not explain how their representations differ structurally. In this work, we study this problem through the task of \emph{Contrastive Embedding Clustering}: identifying sample subsets that are weakly clustered under one representation but strongly clustered under another. We propose \emph{Kernel Optimization for Discrepancy Analysis (KODA)}, a kernel-based framework for contrastive representation comparison and alignment. KODA constructs unified multimodal kernels through modality-wise kernel composition and formulates discrepancy discovery as a constrained optimization problem that searches for coherent structures in one representation while suppressing coherence in a reference representation. This yields interpretable discrepancy directions associated with specific sample subsets and modality interactions. 
To scale KODA to large vision-language datasets, we develop randomized low-dimensional approximations of joint kernels using random projections, including Random Fourier Features for shift-invariant kernels. Empirically, KODA identifies consistent and interpretable discrepancy structures across vision-language representations and provides sample subsets for representation alignment. The code is available at \url{https://github.com/yokiwuuu/KODA}.
\end{abstract}

\section{Introduction}

Multi-modal embedding models have become a central component of modern machine learning systems, enabling joint representation of images, text, and other modalities within a shared semantic space. Contrastive vision--language embeddings such as CLIP~\cite{pmlr-v139-radford21a} and related paradigms including ALIGN~\cite{pmlr-v139-jia21b}, BLIP~\cite{pmlr-v162-li22n}, and BLIP-2~\cite{pmlr-v202-li23q} have demonstrated strong performance in cross-modal retrieval and transfer, and they are now widely used as fixed representation interfaces in larger pipelines. As a result, many vision--language foundation models and public variants coexist, differing in architecture, training objectives, and data sources~\cite{Cherti_2023_CVPR}. This diversity motivates principled methods for comparing representations and characterizing \emph{how} they differ in the structure they induce on data, beyond reporting aggregate downstream metrics.

\begin{figure*}[t]
\begin{center}
\centerline{\includegraphics[width=0.92\textwidth]
{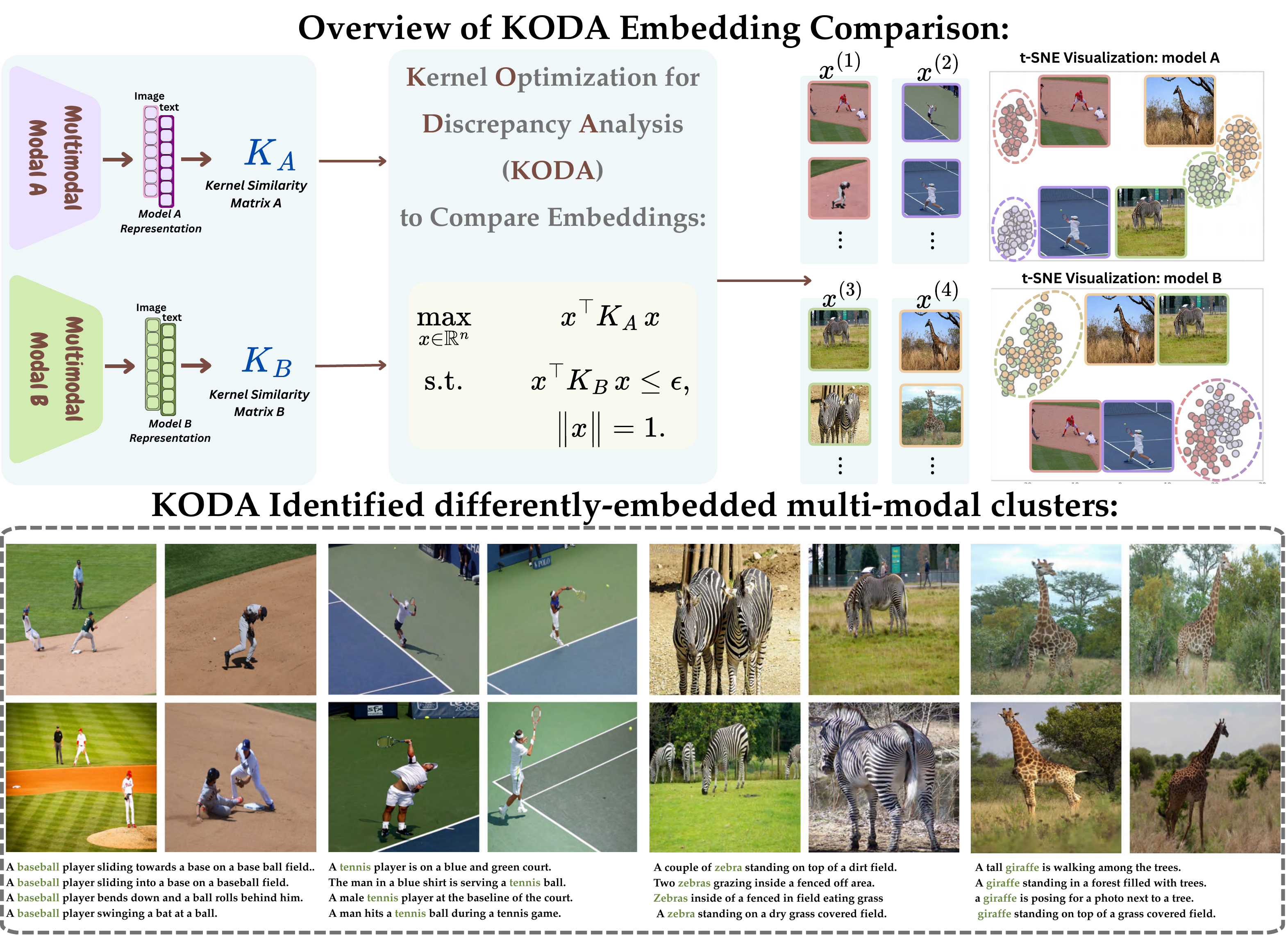}}
\caption{Overview of KODA for \emph{Contrastive Embedding Clustering}, which aims to discover sample clusters that are represented differently by two embeddings. We show KODA-identified contrastive clusters for BLIP and CLIP embeddings on the MS-COCO dataset.
}
\label{fig: fig1 intro}
\end{center}
\vskip -0.2in \vspace{-3mm}
\end{figure*}

Most existing comparisons of multi-modal representations rely on downstream task performance, such as retrieval accuracy or zero-shot classification on benchmark datasets. While effective for ranking models, such evaluations provide limited insight into how representations organize data. In vision--language settings, two models may achieve similar overall accuracy yet induce different groupings of image--text pairs, for example by emphasizing different semantic attributes, compositional patterns, or rare concepts. Identifying such fine-grained differences can support interpretability-oriented workflows, including targeted data curation, model selection, and representation alignment.

A recent line of work on interpretable embedding comparison has begun to study sample groups that are organized differently by two representations. In particular, the SPEC framework by \citet{pmlr-v267-jalali25a} compares two embeddings by constructing kernel similarity matrices on a shared reference dataset and analyzing the eigendecomposition of their kernel-difference matrix. This setting motivates the task we call \emph{Contrastive Embedding Clustering}: identifying sample subsets that are weakly clustered under one representation but strongly clustered under another. However, the kernel-difference construction in SPEC does not explicitly enforce this asymmetric objective; its eigendirections may reflect structure present in both embeddings or aggregate several effects, rather than isolating subsets that are weakly grouped with respect to a specified reference embedding.

In this work, we formulate Contrastive Embedding Clustering as a constrained optimization problem. Given kernel similarity matrices $K_A$ and $K_B$ induced by two representations, KODA seeks components that are strongly grouped under one representation while explicitly constrained to be weakly grouped under the other. We introduce \emph{Kernel Optimization for Discrepancy Analysis} (KODA), which solves
\vspace{-3mm}
\begin{align}\label{Eq: Intro KODA Optimization}
\max_{x\in\mathbb{R}^n}\qquad & x^\top K_A\, x\nonumber \\
\text{\rm subject to}\quad & x^\top K_B\, x \le \epsilon, \\
& \bigl\Vert x\bigr\Vert_2 = 1 . \nonumber
\end{align}
To identify multiple discrepancy modes, we solve a sequence of such problems with orthogonality constraints between the current and previous solutions. Although this optimization problem is non-convex, we show that it admits an efficient solution via structured spectral computations.

We further extend KODA to the comparison of multi-modal representations by constructing unified kernels through modality-wise kernel multiplication. This product-kernel formulation enables direct comparison of representations over paired data, such as image--text samples, within the same constrained framework. However, exact covariance-operator implementations become infeasible in multi-modal settings because effective feature dimensions can scale multiplicatively across modalities.

To enable scalable computation, we develop approximations based on random projections and Random Fourier Features~\cite{Rahimi2007RandomFeatures}, which reduce the effective dimensionality of the joint kernel feature map while preserving approximation guarantees. We evaluate KODA on widely used vision--language representation models, including CLIP~\cite{pmlr-v139-radford21a}, ALIGN~\cite{pmlr-v139-jia21b}, and BLIP-style models~\cite{pmlr-v162-li22n,pmlr-v202-li23q}, using standard benchmarks such as MS-COCO~\cite{Lin2014COCO}. Our experiments show that KODA identifies consistent discrepancy structures, improves over kernel-difference baselines in finding multiple discrepancy modes, and provides sample subsets that can be used for representation alignment.

\vspace{-2mm}
\section{Related Work}\vspace{-1mm}

\textbf{Multi-modal embedding models.}
Vision--language embedding models are now a standard component in cross-modal retrieval, zero-shot recognition, and multi-modal generative AI pipelines. Representative approaches include CLIP~\cite{pmlr-v139-radford21a}, ALIGN~\cite{pmlr-v139-jia21b}, and BLIP / BLIP-2~\cite{pmlr-v162-li22n,pmlr-v202-li23q}, alongside scaling studies and public training efforts that broaden the space of available variants~\cite{Cherti_2023_CVPR}. SigLIP~\cite{zhai2023sigmoid} and SigLip~2 \cite{tschannen2025siglip} propose an alternative pre-training objective based on a sigmoid loss. While these models are commonly compared via downstream benchmarks, such evaluations often provide limited insight into how two embeddings organize the \emph{same} paired data.

\textbf{Explainability in representation learning.}
A broad line of research develops tools to interpret learned representations directly, often by linking internal directions or units to human-understandable concepts. Network Dissection~\cite{Bau2017NetworkDissection} quantifies interpretability of visual representations by measuring alignment between hidden units and semantic concepts. Concept-based explanation methods use examples of a concept to define directions in representation space: TCAV~\cite{pmlr-v80-kim18d} measures concept sensitivity via directional derivatives, while ACE~\cite{Ghorbani2019ACE} automatically extracts concepts and evaluates their importance, reducing reliance on manually specified concept sets. Concept Bottleneck Models~\cite{pmlr-v119-koh20a} further emphasize interpretability by explicitly structuring representations around a set of supervised concepts and enabling interventions on that representation. More recently, \citet{gong2025boosting} improve the visual interpretability of CLIP through unsupervised adversarial fine-tuning with norm regularization, showing gains through feature-attribution and network-dissection analyses. These works motivate analyses that identify \emph{which parts of a dataset} correspond to salient representational structure, but they do not directly target the embeddings' discrepancy discovery.

\textbf{Kernel-based methods for feature representations.}
Kernel methods provide a flexible way to compare, align, fuse, and evaluate learned feature representations through pairwise similarity structure rather than only downstream task accuracy. Recent work has used kernel matrices to explain differences between embedding spaces and align their induced cluster structures~\citep{pmlr-v267-jalali25a}, and to improve vision-language representations by aligning CLIP visual embeddings with stronger vision-centric embeddings such as DINOv2~\citep{gong2025kernel}. Complementarily, kernel product feature maps and maximum kernel entropy methods have been used to fuse and recover embeddings while preserving pairwise similarity information~\citep{wu2025fusing,wu2026maximum_isit}. Kernel-based scores have also become central to generative-model evaluation, beginning with MMD-based comparison~\citep{gretton2012kernel} and the Kernel Inception Distance~\citep{binkowski2018demystifying,wang2023distributedKID}, and extending to entropy- and spectrum-based measures for diversity, novelty, and prompt-aware evaluation, including RKE~\citep{jalali2023information}, Vendi~\citep{friedman2023vendi,ospanov2024towards}, KEN~\citep{pmlr-v235-zhang24bh,Zhang_2025_CVPR}, and conditional kernel entropy~\citep{jalali2025sparke,Ospanov_2025_ICCV,jalali2026conditional}. Kernel-based uses of embeddings have also been explored for model evaluation, selection, and mixture construction~\citep{stein2023exposing,pmlr-v258-hu25a,rezaei2025more,hu2025pak,jafari2026dak}. These methods motivate viewing embeddings and foundation models not only as feature vectors for prediction, but also as kernel-induced representations whose geometry can be systematically compared and evaluated.

\vspace{-3mm}
\section{Preliminaries}
\vspace{-1mm}
\subsection{Embedding maps \& comparison setting}
Let $\mathcal{X}$ denote an input space and let $\psi:\mathcal{X}\to\mathcal{S}$ be an embedding map into a representation space $\mathcal{S}$ (typically Euclidean). We consider two embedding maps
\[
\psi_1:\mathcal{X}\to\mathcal{S}_1,
\qquad
\psi_2:\mathcal{X}\to\mathcal{S}_2,
\]
which may have different output dimensions and thus induce different similarity structures on the same inputs. We assume access to a reference dataset $\{x_i\}_{i=1}^n\subset\mathcal{X}$ sampled from an underlying distribution. Our goal is to compare $\psi_1$ and $\psi_2$ through the geometry they induce on this reference set, without relying on labeled downstream tasks.

\vspace{-2mm}
\subsection{Kernel functions \& kernel-induced quadratic forms}\vspace{-1mm}
A kernel function $k:\mathcal{X}\times\mathcal{X}\to\mathbb{R}$ assigns a similarity score and admits a feature map $\phi:\mathcal{X}\to\mathcal{H}$ into a (possibly infinite-dimensional) Hilbert space $\mathcal{H}$ such that
\[
k(x,x')=\langle \phi(x),\phi(x')\rangle_{\mathcal{H}}.
\]
Given samples $x_1,\ldots,x_n$, the associated kernel matrix $K\in\mathbb{R}^{n\times n}$ is
\begin{equation}\label{eq:kernel_matrix_prelim}
K_{ij}=k(x_i,x_j),
\end{equation}
and is positive semidefinite. Common normalized examples include the cosine kernel $k_{\mathrm{cos}}(u,v)=\frac{u^\top v}{\|u\|_2\|v\|_2}$ (for nonzero $u,v$) and the Gaussian (RBF) kernel $k_{\mathrm{rbf}}(u,v)=\exp\!\big(-\|u-v\|_2^2/(2\sigma^2)\big)$, both satisfying $k(x,x)=1$.

For any $v\in\mathbb{R}^n$ with $\|v\|_2=1$, the quadratic form\vspace{-1mm}
\begin{equation}\label{eq:rayleigh_prelim}
v^\top K v
\end{equation}
is the Rayleigh quotient of $K$ at $v$, and hence lies in $[\lambda_{\min}(K),\lambda_{\max}(K)]$ and is maximized by a top eigenvector of $K$. Moreover, when $\phi$ is finite-dimensional and $K=\Phi\Phi^\top$ with $\Phi_{i:}=\phi(x_i)^\top$, we have the identity
\begin{equation}\label{eq:quad_form_feature_prelim}
v^\top K v \;=\; \|\Phi^\top v\|_2^2 \;=\; \Big\|\sum_{i=1}^n v_i\,\phi(x_i)\Big\|_2^2,
\end{equation}
which will be useful when interpreting and optimizing kernel-based criteria on the reference set. When $\phi(x)\in\mathbb{R}^d$, the empirical covariance matrix (operator) is
\begin{equation}\label{eq:cov_operator_prelim}
C_X := \frac{1}{n}\Phi^\top\Phi = \frac{1}{n}\sum_{i=1}^n \phi(x_i)\phi(x_i)^\top \in \mathbb{R}^{d\times d}.
\end{equation}
The matrices $K/n$ and $C_X$ share the same non-zero eigenvalues (including multiplicities), since they are products of $\Phi$ and $\Phi^\top$ in opposite orders.

\subsection{Shift-invariant kernels \& random Fourier features}
To scale kernel computations, we use random Fourier features (RFF) \cite{Rahimi2007RandomFeatures,sutherland2015error} for shift-invariant kernels on $\mathbb{R}^d$. Consider kernels of the form $k(x,x')=\kappa(x-x')$, where $\kappa$ is continuous and positive definite. By Bochner’s theorem, there exists a non-negative finite measure (taking non-negative values) $\widehat{\kappa}$, which is the Fourier transform of $\kappa$, such that
\begin{equation}\label{eq:bochner_prelim}
\kappa(\delta)=\int_{\mathbb{R}^d} \exp\bigl(\mathrm{i}\omega^\top \delta\bigr)\widehat{\kappa}(\omega) \mathrm{d}\omega.
\end{equation}
For a normalized kernel with $\kappa(0)=1$, $\widehat{\kappa}$ will be a probability measure. We then sample $\omega_1,\ldots,\omega_m \stackrel{\mathrm{i.i.d.}}{\sim}\widehat{\kappa}$ and define the RFF proxy feature map $\varphi_r:\mathcal{X}\rightarrow \mathbb{R}^{2r}$ as
\begin{equation}\label{eq:rff_prelim}
\varphi_r(x)=\frac{1}{\sqrt{r}}\bigl[\cos(\omega_1^\top x),\sin(\omega_1^\top x),.,\cos(\omega_r^\top x),\sin(\omega_r^\top x)\bigr].
\end{equation}
A direct calculation yields $\mathbb{E}\!\left[\langle \varphi_r(x),\varphi_r(x')\rangle\right]=k(x,x')$, where the expectation is over the sampled frequency vectors. Given $\{x_i\}_{i=1}^n$, we form the approximate kernel matrix $\widetilde{K}$ by $\widetilde{K}_{ij}=\langle \varphi_r(x_i),\varphi_r(x_j)\rangle$, enabling computation of kernel quadratic forms and spectral quantities using an explicit feature representation.

\section{KODA: Optimization-based discrepancy identification in kernel matrices}
\label{sec:koda}

We formalize the comparison problem as a \emph{Contrastive Embedding Clustering} task. Given two embeddings $A$ and $B$ evaluated on the same reference set $\{x_1,\ldots , x_n\}$, the goal is to identify subsets or signed directions over the reference samples that form a coherent cluster under one embedding but not under the other. In this sense, the task is not merely to measure a global discrepancy between embeddings, but to localize it by extracting directions in the reference set where the two kernel similarity geometries disagree.

For two embeddings $A$ and $B$, we are given their normalized kernel similarity matrices $K_A,K_B\in\mathbb{R}^{n\times n}$ constructed on the same reference set $\{x_1,\ldots , x_n\}$, where $\frac{1}{n}K_A\succeq 0$ and $\frac{1}{n}K_B\succeq 0$ are PSD and unit-trace. We develop an optimization formulation for Contrastive Embedding Clustering: extracting directions on the reference set along which the structure induced by one embedding is strongly clustered while the structure by the other is weakly clustered.

\subsection{Constrained quadratic programming for kernel-based embedding comparison}

For a target level $\epsilon>0$, we consider the following quadratically constrained program for Contrastive Embedding Clustering. The constraint enforces weak clusterability under $K_B$, while the objective searches for a direction that is maximally clustered under $K_A$:
\begin{align}
\label{eq:qcqp_base}
\begin{aligned}
\max_{x\in\mathbb{R}^n}\quad & x^\top K_A\, x \\
\text{s.t.}\quad & x^\top K_B\, x \le \epsilon, \\
& \bigl\Vert x\bigr\Vert^2_2 = 1 .
\end{aligned}
\end{align}
Let $x^\star$ denote an optimizer. The constraint $x^\top K_B,x\le \epsilon$ enforces that the discrepancy direction has limited energy under the second embedding, while the objective selects the direction with the strongest similarity concentration under the first embedding. Thus, $x^\star$ identifies a contrastive cluster direction: a set-level pattern that is salient in embedding $A$ but suppressed, diffuse, or absent in embedding $B$.

Although the optimization problem \eqref{eq:qcqp_base} is a non-convex optimization task (maximizing a convex objective function), its optimizers admit an eigenvector-based characterization as revealed by KKT conditions.

\begin{proposition}[Eigenvector form of an optimizer]\label{prop:kkt_eig}
Assume \eqref{eq:qcqp_base} is feasible and there exists $\bar x$ with $\|\bar x\|_2=1$ and $\bar x^\top K_B\, \bar x < \epsilon$.
Then, there exist scalars $\lambda^\star\ge 0$ and $\nu^\star\in\mathbb{R}$ such that any optimizer $x^\star$ with unit-norm ($\|x^\star\|_2=1$) satisfies
\begin{align}
\label{eq:kkt_stationarity}
(K_A\,-\lambda^\star K_B\,)\,x^\star &= \nu^\star x^\star, \\
\label{eq:kkt_slack}
\lambda^\star\big({x^\star}^\top K_B\, x^\star-\epsilon\big) &= 0 .
\end{align}
\end{proposition}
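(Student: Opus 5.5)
The plan is to avoid a pointwise KKT argument. A first-order constraint qualification can fail here: if $K_B x^\star=\epsilon x^\star$, the gradients $2x^\star$ and $2K_Bx^\star$ of the two constraints are parallel, and even Mangasarian--Fromovitz fails. Instead I would exploit the hidden convexity of this homogeneous quadratic program through the S-lemma. This also explains why the statement can take $(\lambda^\star,\nu^\star)$ \emph{independent} of the optimizer: both multipliers will come from a single certificate attached to the optimal value.

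\textbf{Step 1 (homogenization).} Let $p^\star$ be the optimal value of \eqref{eq:qcqp_base}. It is attained because the feasible set is compact and nonempty. Since $\|x\|_2=1$, the constraint is equivalent to $x^\top (K_B-\epsilon I)x\le 0$. Optimality of $p^\star$ then says that $x^\top(K_A-p^\star I)x\le 0$ for every unit $x$ with $x^\top(K_B-\epsilon I)x\le 0$. Both quadratic forms are homogeneous of degree two, so by rescaling the same implication holds for all $x\in\mathbb{R}^n$, including $x=0$.

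\textbf{Step 2 (S-lemma).} The Slater point $\bar x$ satisfies $\bar x^\top(K_B-\epsilon I)\bar x<0$. By the S-lemma (Yakubovich) applied to the implication of Step 1, there exists $\lambda^\star\ge 0$ with
\[
M:=K_A-p^\star I-\lambda^\star (K_B-\epsilon I)\preceq 0 .
\]
Set $\nu^\star:=p^\star-\lambda^\star\epsilon$, so that $M=K_A-\lambda^\star K_B-\nu^\star I$. This step is the crux; everything else is bookkeeping. If one prefers to avoid citing the S-lemma, a fallback is to show $\lambda_{\max}(K_A-\lambda K_B)+\lambda\epsilon$ is convex in $\lambda$ and that minimizing it over $\lambda\ge0$ yields zero duality gap, again using $\bar x$ and the homogeneity. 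I expect the S-lemma citation to be the cleaner route.

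\textbf{Step 3 (extracting the conditions at any optimizer).} Let $x^\star$ be any unit-norm optimizer. Then
\[
{x^\star}^\top M x^\star = p^\star-\lambda^\star {x^\star}^\top K_B x^\star - p^\star+\lambda^\star\epsilon=\lambda^\star\big(\epsilon-{x^\star}^\top K_Bx^\star\big)\ge 0 .
\]
The inequality holds by feasibility and $\lambda^\star\ge 0$. Since $M\preceq 0$, we also have ${x^\star}^\top Mx^\star\le 0$, so the quantity equals zero, which is exactly \eqref{eq:kkt_slack}. Finally, for a negative semidefinite $M$, the condition ${x^\star}^\top Mx^\star=0$ forces $Mx^\star=0$: write $M=-LL^\top$, so $\|L^\top x^\star\|^2=0$ and hence $L^\top x^\star=0$. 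Therefore $Mx^\star=0$, i.e.\ $(K_A-\lambda^\star K_B)x^\star=\nu^\star x^\star$, which is \eqref{eq:kkt_stationarity}.
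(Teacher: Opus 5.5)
Your proof is correct, and it takes a genuinely different route from the paper's.

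The paper writes the Lagrangian $x^\top K_A x-\lambda(x^\top K_B x-\epsilon)-\nu(x^\top x-1)$ and reads off stationarity and complementary slackness at a KKT point. It then asserts that the Slater point $\bar x$ is a ``standard constraint qualification'' guaranteeing that optimizers are KKT points. That is short and uses only first-order machinery, but it is only a sketch. As you note, strict feasibility is not a constraint qualification for this nonconvex problem: at an optimizer with the inequality active and $K_Bx^\star=\epsilon x^\star$, both LICQ and MFCQ fail. Also, the paper's multipliers are attached to each KKT point separately, whereas the statement asks for one pair $(\lambda^\star,\nu^\star)$ valid for every optimizer.

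Your S-lemma certificate $K_A-\lambda^\star K_B-\nu^\star I\preceq 0$ resolves both issues at once, and it gives more. Since ${x^\star}^\top M x^\star=0$ with $M\preceq 0$, you get $\nu^\star=p^\star-\lambda^\star\epsilon=\lambda_{\max}(K_A-\lambda^\star K_B)$. So every optimizer is a \emph{leading} eigenvector of $K_A-\lambda^\star K_B$, which is exactly what justifies restricting the one-dimensional search in Algorithm~\ref{alg:koda} to leading eigenvectors. Pointwise stationarity alone only yields \emph{some} eigenvector, with a multiplier that need not be a global certificate.

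The price of your route is invoking the homogeneous S-lemma, equivalently Dines' theorem on the convexity of the joint range of two quadratic forms. That result holds in every dimension $n$, so no further caveat is needed.
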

\begin{proof}
We present the proof in the Appendix.
\end{proof}

\paragraph{Searching over $\lambda$.}
Proposition~\ref{prop:kkt_eig} motivates a one-dimensional search over $\lambda\ge 0$: for each $\lambda$, compute a leading eigenvector $x_\lambda$ of $K_A\,-\lambda K_B\,$ (normalized to $\|x_\lambda\|_2=1$) and evaluate $g(\lambda):=x_\lambda^\top K_B\, x_\lambda$. We then select a $\lambda$ that yields $g(\lambda)\le \epsilon$ and maximizes $x_\lambda^\top K_A\, x_\lambda$ among such candidates (up to numerical tolerance).

\subsection{Iterative extraction of discrepancy directions via KODA}

A single solution of \eqref{eq:qcqp_base} yields one contrastive cluster direction. KODA solves the Contrastive Embedding Clustering task by extracting such directions, iteratively solving \eqref{eq:qcqp_base} while enforcing orthogonality to the found directions.

Let $x_1,\ldots,x_{t-1}$ be previously extracted unit vectors, and let $U_{t-1}\in\mathbb{R}^{n\times (t-1)}$ have orthonormal columns spanning $\mathrm{span}\{x_1,\ldots,x_{t-1}\}$. Define the orthogonal projector
\begin{align}
\label{eq:proj_def}
P_{t-1} \;:=\; I-U_{t-1}U_{t-1}^\top .
\end{align}
At iteration $t$, we solve
\begin{align}
\label{eq:qcqp_iter}
\begin{aligned}
\max_{x\in\mathbb{R}^n}\quad & x^\top K_A\, x \\
\text{s.t.}\quad & x^\top K_B\, x \le \epsilon, \\
& \|x\|_2 = 1, \\
& U_{t-1}^\top x = 0 .
\end{aligned}
\end{align}

\begin{proposition}\label{prop:proj_reduction}
The optimization problem \eqref{eq:qcqp_iter} is equivalent to
\begin{align}
\label{eq:qcqp_projected}
\begin{aligned}
\max_{x\in\mathbb{R}^n}\quad & x^\top (P_{t-1}K_A\,P_{t-1}) x \\
\text{s.t.}\quad & x^\top (P_{t-1}K_B\,P_{t-1}) x \le \epsilon, \\
& \|x\|_2 = 1 ,
\end{aligned}
\end{align}
and every optimizer of \eqref{eq:qcqp_projected} satisfies $U_{t-1}^\top x=0$.
\end{proposition}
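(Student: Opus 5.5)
The plan is to split any $x\in\mathbb{R}^n$ orthogonally as $x = P_{t-1}x + U_{t-1}U_{t-1}^\top x$ and compare the two problems through this decomposition. Write $f(x)=x^\top K_A x$, $g(x)=x^\top K_B x$, and $\tilde f(x)=x^\top P_{t-1}K_AP_{t-1}x = f(P_{t-1}x)$, $\tilde g(x)= g(P_{t-1}x)$. These identities hold because $P_{t-1}$ is symmetric and idempotent.

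\textbf{Step 1 (agreement on the subspace).} If $U_{t-1}^\top x=0$, then $P_{t-1}x=x$, so $\tilde f(x)=f(x)$ and $\tilde g(x)=g(x)$. Hence every feasible point of \eqref{eq:qcqp_iter} is feasible for \eqref{eq:qcqp_projected} with the same value. In particular, the optimal value of \eqref{eq:qcqp_projected} is at least that of \eqref{eq:qcqp_iter}.

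\textbf{Step 2 (reverse inequality by renormalising).} Take any feasible $x$ for \eqref{eq:qcqp_projected} with $s:=\|P_{t-1}x\|_2^2<1$.
\begin{itemize}
\item If $s=0$, then $\tilde f(x)=0$, and any feasible point of the subspace problem does at least as well, since $K_A\succeq 0$.
\item If $s>0$, set $y=P_{t-1}x/\sqrt{s}$. Then $U_{t-1}^\top y=0$, $\|y\|_2=1$, $f(y)=\tilde f(x)/s$ and $g(y)=\tilde g(x)/s$.
\end{itemize}
Whenever $g(y)\le\epsilon$, $y$ is feasible for \eqref{eq:qcqp_iter} with value $\tilde f(x)/s$. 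If moreover $\tilde f(x)>0$, this value is strictly larger than $\tilde f(x)$. That gives both the equality of optimal values and the claim that an optimizer cannot carry mass along $U_{t-1}$: a positive-value optimizer with $s<1$ would be strictly improved by $y$.

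\textbf{Step 3 (the main obstacle: the constraint after rescaling).} Dividing by $s$ inflates $\tilde g$ as well as $\tilde f$, so $y$ may violate $g(y)\le\epsilon$. This is exactly the case the proof must rule out or repair.
\begin{itemize}
\item First I would try to keep $y$ feasible by mixing. Take a point $z$ of \eqref{eq:qcqp_iter} with $g(z)<\epsilon$ (a Slater-type point as in Proposition~\ref{prop:kkt_eig}, now inside $\mathrm{range}(P_{t-1})$). Move along the unit-sphere arc in $\mathrm{range}(P_{t-1})$ from $y$ toward $z$ until $g=\epsilon$. Then check, using convexity of $f$ and $g$ on that two-dimensional span, that $f$ at the stopping point still exceeds $\tilde f(x)$.
\item The alternative is a KKT route. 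Stationarity for \eqref{eq:qcqp_projected} reads $P_{t-1}(K_A-\lambda K_B)P_{t-1}x=\nu x$. Applying $U_{t-1}^\top$ gives $\nu\,U_{t-1}^\top x=0$, so $U_{t-1}^\top x=0$ whenever $\nu\neq 0$. It then remains to show $\nu>0$ at an optimizer, which follows from $\nu=\tilde f(x)-\lambda\tilde g(x)$ together with complementary slackness.
\end{itemize}

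Before committing to either route, I would test small examples, e.g.\ $n=2$, $U_{t-1}=e_2$, $K_A=K_B\propto e_1e_1^\top$. In such a case the active constraint might force $\nu=0$ and a nonzero $U_{t-1}$-component. If so, the orthogonality claim must be read as holding for the optimizer selected in $\mathrm{range}(P_{t-1})$ (equivalently, for those with $\nu^\star\neq0$). The equivalence of the two problems would then be stated via the restriction in Step 1.
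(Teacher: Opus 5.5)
Your Step~1 is exactly the paper's entire proof. It uses $P_{t-1}=P_{t-1}^\top=P_{t-1}^2$ to get $x^\top Kx=x^\top P_{t-1}KP_{t-1}x$ on $\{U_{t-1}^\top x=0\}$, and stops there. That shows only that \eqref{eq:qcqp_iter} coincides with \eqref{eq:qcqp_projected} \emph{restricted to} $\mathrm{range}(P_{t-1})$. The feasible points of \eqref{eq:qcqp_projected} with a $U_{t-1}$-component, which your Steps~2--3 try to handle, are never addressed.

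Your suspicion is right: that part cannot be proved. The unrestricted equivalence and the orthogonality of optimizers are both false, even when \eqref{eq:qcqp_iter} has a Slater point. Take $n=3$, $K_A=\mathrm{diag}(1,0,2)$, $K_B=\mathrm{diag}(1,\tfrac14,0)$ and $\epsilon=\tfrac12$.
\begin{itemize}
\item The $t=1$ problem has the unique solution $\pm e_3$, since $x_1^2+2x_3^2\le 2$ with equality only there. So $U_1=e_3$, $P_1K_AP_1=\mathrm{diag}(1,0,0)$ and $P_1K_BP_1=\mathrm{diag}(1,\tfrac14,0)$.
\item In \eqref{eq:qcqp_iter} at $t=2$, write $a=x_1^2$. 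The constraint becomes $a+(1-a)/4\le\tfrac12$, i.e.\ $a\le\tfrac13$. The optimal value is $\tfrac13$, and $e_2$ is strictly feasible.
\item In \eqref{eq:qcqp_projected}, the constraint gives $x_1^2\le\tfrac12-x_2^2/4\le\tfrac12$. The optimal value is $\tfrac12$, attained only at $x_2=0$, $x_1^2=x_3^2=\tfrac12$.
\end{itemize}
Every optimizer of \eqref{eq:qcqp_projected} therefore has $U_1^\top x=\pm1/\sqrt2\neq0$, and the two problems do not even share their optimal value.

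The same example shows where each of your Step~3 routes breaks.
\begin{itemize}
\item \textbf{Mixing route.} Take $x=(1/\sqrt2,0,1/\sqrt2)$, so $s=\tfrac12$, $y=e_1$ and $g(y)=1$. Here $\tilde f(x)=\tfrac12$, but no unit $y\in\mathrm{range}(P_1)$ with $g(y)\le\epsilon$ has $f(y)>\tfrac13$. So no arc toward any Slater point can reach the inequality you wanted.
\item \textbf{KKT route.} Stationarity at that $x$ forces $\lambda^\star=1$ and $\nu^\star=\tilde f(x)-\lambda^\star\epsilon=0$. The step $\nu^\star>0$ is exactly what fails.
\end{itemize}
In general, your Step~2 computation shows that the value of \eqref{eq:qcqp_projected} is $\max_y f(y)\min\{1,\epsilon/g(y)\}$ over unit $y\in\mathrm{range}(P_{t-1})$. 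This strictly exceeds the value of \eqref{eq:qcqp_iter} as soon as some $y$ with $g(y)>\epsilon$ has $\epsilon f(y)/g(y)$ above it.

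Your fallback is the correct repair. The provable statement is equivalence with \eqref{eq:qcqp_projected} plus the constraint $x\in\mathrm{range}(P_{t-1})$. Keep alongside it your observation that eigenvectors of $P_{t-1}(K_A-\lambda K_B)P_{t-1}$ with nonzero eigenvalue automatically lie in $\mathrm{range}(P_{t-1})$.

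However, your two readings (``selected in $\mathrm{range}(P_{t-1})$'' and ``$\nu^\star\neq0$'') are not equivalent. The true optimizer $(1/\sqrt3,\sqrt{2/3},0)$ of \eqref{eq:qcqp_iter} has $\lambda^\star=\tfrac43$ and $\nu^\star=-\tfrac13$. At that $\lambda$, the top eigenvector of the projected $n\times n$ matrix $\mathrm{diag}(-\tfrac13,-\tfrac13,0)$ is $e_3$, the previously extracted direction. Any eigen-search must therefore be confined to $\mathrm{range}(P_{t-1})$ explicitly.

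Finally, your $n=2$ test case breaks the claim only when the subspace problem is infeasible. The example above closes that loophole.
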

\begin{proof}
We present the proof in the Appendix.
\end{proof}

Algorithm~\ref{alg:koda} summarizes the steps in KODA. At each iteration, we work with the projected matrices $A=PK_A\,P$ and $B=PK_B\,P$ from \eqref{eq:qcqp_projected} and perform a 1D search over $\lambda$ via repeated leading-eigenvector computations.

\subsection{Scalable principal-eigenvector computation via covariance blocks}
\label{subsec:cov_blocks}

KODA requires repeated computation of a principal eigenvector of matrices of the form $K_A\,-\lambda K_B\,\in\mathbb{R}^{n\times n}$. For large reference sets (e.g., $n\gtrapprox\,$20000), this step can be a computational bottleneck in the dense-kernel regime: even iterative eigensolvers (Lanczos/power iteration) rely on repeated matrix--vector products, each costing $\Theta(n^2)$ time (and $\Theta(n^2)$ memory if $K_A\,,K_B\,$ are explicitly formed). When the kernels admit explicit feature representations with feature dimensions $d_1,d_2\ll n$ (e.g., via random features), the same principal-eigenvector computation can be reduced to an eigenproblem of dimension $d_1+d_2$.

Assume the kernel matrices factor as
\begin{align}
\label{eq:gram_factor_section}
K_A\, &= \Phi_A\Phi_A^\top, \qquad
K_B\, = \Phi_B\Phi_B^\top,
\end{align}
with $\Phi_A\in\mathbb{R}^{n\times d_1}$ and $\Phi_B\in\mathbb{R}^{n\times d_2}$. Let $\Phi := [\Phi_A\;\;\Phi_B]\in\mathbb{R}^{n\times(d_1+d_2)}$ and define the covariance blocks
\begin{align}
\label{eq:c_blocks_section}
C_{AA} &:= \Phi_A^\top\Phi_A, \qquad
C_{AB} := \Phi_A^\top\Phi_B, \nonumber\\
C_{BA} &:= \Phi_B^\top\Phi_A, \qquad
C_{BB} := \Phi_B^\top\Phi_B,
\end{align}
so that $G:=\Phi^\top\Phi=\begin{bmatrix}C_{AA}&C_{AB}\vspace{1mm}\\ C_{BA}&C_{BB}\end{bmatrix}$.

\begin{proposition}
\label{prop:principal_block}
For coefficient $\lambda\ge 0$, define block matrices $S_\lambda:=\mathrm{diag}(I_{d_1},-\lambda I_{d_2})$ and
\begin{align}
\label{eq:block_matrix_section}
M_\lambda \;:=\; S_\lambda G \;=\;
\begin{bmatrix}
C_{AA} & C_{AB} \vspace{1mm}\\
-\lambda C_{BA} & -\lambda C_{BB}
\end{bmatrix}.
\end{align}
Let $\eta_\lambda:=\lambda_{\max}(K_A\,-\lambda K_B\,)$ and let $u_\lambda$ be a (right) eigenvector of $M_\lambda$ associated with eigenvalue $\eta_\lambda$. Then, $x_\lambda:=\Phi u_\lambda$ is a principal eigenvector of $K_A\,-\lambda K_B\,$ (after normalization).
\end{proposition}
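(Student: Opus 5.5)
The plan is to use the factorization $K_A-\lambda K_B=\Phi S_\lambda\Phi^\top$, which holds because $\Phi S_\lambda\Phi^\top=\Phi_A\Phi_A^\top-\lambda\Phi_B\Phi_B^\top$. With this identity, the $n\times n$ eigenproblem becomes the $(d_1+d_2)$-dimensional problem for $S_\lambda\Phi^\top\Phi=M_\lambda$. The underlying principle is the usual one: for matrices $X\in\mathbb{R}^{n\times p}$ and $Y\in\mathbb{R}^{p\times n}$, the products $XY$ and $YX$ share nonzero eigenvalues, and eigenvectors transfer through multiplication by $X$.

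The central step is a direct computation. Suppose $M_\lambda u_\lambda=\eta_\lambda u_\lambda$, and set $x_\lambda=\Phi u_\lambda$. Then
\begin{align*}
(K_A-\lambda K_B)\,x_\lambda=\Phi S_\lambda\Phi^\top\Phi u_\lambda=\Phi M_\lambda u_\lambda=\eta_\lambda\,\Phi u_\lambda=\eta_\lambda x_\lambda .
\end{align*}
So whenever $x_\lambda\neq 0$, it is an eigenvector of $K_A-\lambda K_B$ with eigenvalue $\eta_\lambda=\lambda_{\max}(K_A-\lambda K_B)$. That makes it a principal eigenvector once normalized.

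The main obstacle is showing $x_\lambda\neq 0$, together with the implicit claim that $\eta_\lambda$ is actually an eigenvalue of $M_\lambda$.

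For nonvanishing, assume $\Phi u_\lambda=0$. Then $M_\lambda u_\lambda=S_\lambda\Phi^\top\Phi u_\lambda=0$, which forces $\eta_\lambda u_\lambda=0$. Since $u_\lambda\neq 0$, this gives $\eta_\lambda=0$. So the argument succeeds whenever $\eta_\lambda\neq 0$.

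For $\eta_\lambda\neq 0$, the reverse transfer shows $\eta_\lambda$ is an eigenvalue of $M_\lambda$. If $y$ is an eigenvector of $K_A-\lambda K_B$ for $\eta_\lambda$, put $u=S_\lambda\Phi^\top y$. Then $M_\lambda u=S_\lambda\Phi^\top(\Phi S_\lambda\Phi^\top y)=\eta_\lambda u$, and $u\neq 0$ because $\Phi u=\eta_\lambda y\neq 0$.

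The degenerate case $\eta_\lambda=0$ needs separate treatment. The natural convention is to pick $u_\lambda$ with $\Phi u_\lambda\neq 0$. I would check whether a simpler remark handles it instead. If $\eta_\lambda=0$ is the top eigenvalue, then $K_A\preceq\lambda K_B$. Note that $\eta_\lambda\ge 0$ automatically whenever $n>d_1+d_2$, since $K_A-\lambda K_B$ then has a nontrivial kernel. In this case any vector in the kernel of $K_A-\lambda K_B$ is already a principal eigenvector. I would therefore state the proof under the generic assumption $\eta_\lambda\neq 0$, or under the choice $\Phi u_\lambda\neq 0$, and note that the degenerate case is trivial to handle separately.
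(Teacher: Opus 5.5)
Your proof is correct and follows the paper's argument: the factorization $K_A-\lambda K_B=\Phi S_\lambda\Phi^\top$, the identity $(K_A-\lambda K_B)\Phi u=\Phi M_\lambda u$, the same nonvanishing argument, and the fact that $\eta_\lambda$ is an eigenvalue of $M_\lambda$. The paper gets that last fact by citing that $AB$ and $BA$ share nonzero eigenvalues, while you prove it directly via $u=S_\lambda\Phi^\top y$. Your caveat about $\eta_\lambda=0$ is well placed, since the paper's proof also silently needs $\eta_\lambda\neq 0$; note, though, that the restriction is genuinely needed rather than a formality, because any nonzero $u_\lambda\in\ker\Phi$ is an eigenvector of $M_\lambda$ at eigenvalue $0$ and yields $x_\lambda=0$.
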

\begin{proof}
We present the proof in the Appendix.
\end{proof}

\paragraph{Discussion and computational implications.}
Proposition~\ref{prop:principal_block} reduces the principal-eigenvector computation of the $n\times n$ matrix $K_A\,-\lambda K_B\,$ to an eigenproblem of size $(d_1+d_2)\times(d_1+d_2)$, which is independent of $n$. This is particularly beneficial when $d_1,d_2\ll n$, as is typical with explicit feature maps or random features. In practice, we form the covariance blocks in \eqref{eq:c_blocks_section} in $\mathcal{O}(n(d_1+d_2)^2)$ time (or $\mathcal{O}(n(d_1+d_2))$ time if $C_{ij}$ are accumulated online with a single pass) and then compute a dominant eigenvector of $M_\lambda$. The lifted vector $x_\lambda=\Phi u_\lambda$ can be obtained without forming any $n\times n$ matrix, and can be normalized to satisfy $\|x_\lambda\|_2=1$ before evaluating the constraint quantity $x_\lambda^\top K_B\, x_\lambda$ (which can likewise be computed via $\Phi_B$ as $x_\lambda^\top K_B\, x_\lambda=\|\Phi_B^\top x_\lambda\|_2^2$).

\paragraph{Sample complexity of the covariance-block eigendirections.}
To state a population sample-complexity guarantee, we use the \emph{normalized} covariance blocks
$\widehat C_{ij}:=\frac1n \Phi_i^\top\Phi_j$ and $\widehat G:=\frac1n\Phi^\top\Phi$.
Let $z(x):=[\Phi_A(x);\Phi_B(x)]\in\mathbb{R}^{d_1+d_2}$ denote the per-sample feature vector
(the $i$th row of $\Phi$ is $z(x_i)^\top$).
Assume $\| \Phi_A(x)\|_2\le 1$ and $\|\Phi_B(x)\|_2\le 1$ for all $x$ (equivalently, $\|z(x)\|_2^2\le 2$).
Define the population block covariance $G:=\mathbb{E}[z(X)z(X)^\top]$ and
$S_\lambda:=\mathrm{diag}(I_{d_1},-\lambda I_{d_2})$.
Finally, define the symmetric matrices
$B_\lambda:=G^{1/2}S_\lambda G^{1/2}$ and $\widehat B_\lambda:=\widehat G^{1/2}S_\lambda \widehat G^{1/2}$.

\begin{theorem}
\label{thm:cov_block_sample_complexity}
Consider the setting described above. Then, for every $\lambda\ge 0$ and $\delta\in(0,1)$, the following holds with probability at least $1-\delta$,
\begin{align*}
\label{eq:cov_block_main_bound}
\bigl\|\widehat B_\lambda - B_\lambda\bigr\|_2
\:\le\:
12\,\|S_\lambda\|_2\,\sqrt[4]{\frac{d_1+d_2}{n}}\,
\Bigl(1+\sqrt{\log(1/\delta)}\Bigr).
\end{align*}
Moreover, if $B_\lambda$ has eigengap $\gamma_\lambda:=\lambda_1(B_\lambda)-\lambda_2(B_\lambda)>0$,
and $v_1,\widehat v_1$ are unit top eigenvectors of $B_\lambda,\widehat B_\lambda$, then
\begin{align}
\sin\angle(\widehat v_1,v_1)\;\le\;\frac{\|\widehat B_\lambda-B_\lambda\|_2}{\gamma_\lambda}.
\end{align}
\end{theorem}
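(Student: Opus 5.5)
The bound on $\|\widehat B_\lambda - B_\lambda\|_2$ will come from two ingredients: a concentration bound for the empirical covariance $\widehat G$ around $G$, and a matrix square-root perturbation inequality that turns a bound on $\|\widehat G-G\|_2$ into a bound on $\|\widehat G^{1/2}-G^{1/2}\|_2$. The fourth root in the statement comes from this square-root step: concentration gives $\|\widehat G - G\|_2 = O(\sqrt{(d_1+d_2)/n})$, and the square-root map is only $1/2$-Hölder on PSD matrices. Concretely, I will use the operator-monotonicity inequality $\|X^{1/2}-Y^{1/2}\|_2\le \|X-Y\|_2^{1/2}$ for $X,Y\succeq 0$.

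\emph{Step 1 (algebraic decomposition).} I will write
\[
\widehat B_\lambda - B_\lambda = (\widehat G^{1/2}-G^{1/2})S_\lambda \widehat G^{1/2} + G^{1/2}S_\lambda(\widehat G^{1/2}-G^{1/2}).
\]
Since $\|z(x)\|_2^2\le 2$, both $G$ and $\widehat G$ have operator norm at most $2$, so $\|G^{1/2}\|_2,\|\widehat G^{1/2}\|_2\le\sqrt2$. This gives
\[
\|\widehat B_\lambda-B_\lambda\|_2\le 2\sqrt2\,\|S_\lambda\|_2\,\|\widehat G^{1/2}-G^{1/2}\|_2\le 2\sqrt2\,\|S_\lambda\|_2\,\|\widehat G-G\|_2^{1/2}.
\]

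\emph{Step 2 (concentration of $\widehat G$).} This is the main obstacle: I need a dimension-dependent bound of the form
\[
\|\widehat G-G\|_2\le c\sqrt{(d_1+d_2)/n}\,\bigl(1+\sqrt{\log(1/\delta)}\bigr)^2,
\]
so that after the square root it produces exactly the factor $\bigl(1+\sqrt{\log(1/\delta)}\bigr)$ in the statement. My first attempt is to bound the Frobenius norm, which dominates the operator norm. The expectation satisfies
\[
\mathbb{E}\|\widehat G-G\|_F\le\sqrt{\mathbb{E}\|zz^\top\|_F^2/n}\le 2/\sqrt n.
\]
For the tail, I will apply McDiarmid's inequality, or equivalently a Hilbert-space Hoeffding bound, to $\|\widehat G-G\|_F$. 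Replacing one sample changes this quantity by at most $4/n$, which gives
\[
\|\widehat G-G\|_2\le \frac{2+4\sqrt{2\log(1/\delta)}}{\sqrt n}.
\]
This is already stronger than needed, because $1\le d_1+d_2$ and $a+b\sqrt{L}\le c(1+\sqrt L)^2$. If I wanted a route that makes the factor $d_1+d_2$ appear genuinely, I would instead use matrix Bernstein with an $\epsilon$-net, but the Frobenius route suffices. Combining with Step 1 and checking constants ($2\sqrt2\cdot\sqrt{6}\le 12$) yields the stated bound, with $\sqrt[4]{1/n}\le\sqrt[4]{(d_1+d_2)/n}$ absorbing the rest.

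\emph{Step 3 (eigenvector perturbation).} The second claim follows from the Davis--Kahan $\sin\Theta$ theorem applied to the symmetric matrices $B_\lambda$ and $\widehat B_\lambda$. Weyl's inequality controls the eigenvalue shift, and the bound $\sin\angle(\widehat v_1,v_1)\le\|\widehat B_\lambda-B_\lambda\|_2/\gamma_\lambda$ is the standard form using the population eigengap. I will cite the Davis--Kahan theorem directly; the only thing to check is that its hypotheses hold here, namely symmetry of both matrices and a positive eigengap $\gamma_\lambda$.
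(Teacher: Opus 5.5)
Your concentration bound is correct, but you reach it by a different and sharper route than the paper. The two routes part ways at the square-root step.

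\textbf{Route comparison.} The paper stays in Frobenius norm throughout. It uses the same Hilbert-space Hoeffding bound you invoke, giving $\|\widehat G-G\|_F$ of order $n^{-1/2}\bigl(1+\sqrt{2\log(1/\delta)}\bigr)$. It then applies the Powers--St{\o}rmer inequality $\|\widehat G^{1/2}-G^{1/2}\|_F^2\le\|\widehat G-G\|_*$ together with $\|X\|_*\le\sqrt{D}\,\|X\|_F$, and this is exactly where the factor $(d_1+d_2)^{1/4}$ enters. Only at the end does it pass from $\|\widehat B_\lambda-B_\lambda\|_F$ to the operator norm.

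You stay in operator norm and use $\|X^{1/2}-Y^{1/2}\|_2\le\|X-Y\|_2^{1/2}$, which costs no dimension factor. What you actually prove is the dimension-free bound $\|\widehat B_\lambda-B_\lambda\|_2\le 4\sqrt3\,\|S_\lambda\|_2\,n^{-1/4}\bigl(1+\sqrt{\log(1/\delta)}\bigr)$. The $\sqrt[4]{d_1+d_2}$ in the statement is therefore pure slack, and your constants check out even with your slightly loose McDiarmid deviation term. The paper's route buys a Frobenius-norm bound on $\widehat B_\lambda-B_\lambda$, which the statement does not need.

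Because your square-root inequality carries the whole improvement, you should include its two-line proof. With $\epsilon=\|X-Y\|_2$ we have $X\preceq Y+\epsilon I$. Operator monotonicity of the square root then gives $X^{1/2}\preceq(Y+\epsilon I)^{1/2}\preceq Y^{1/2}+\sqrt{\epsilon}\,I$, and the reverse direction follows symmetrically.

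Your framing of Step 2 as needing a dimension-dependent bound is a red herring. The Frobenius concentration of $\widehat G$ is already dimension-free, because $\|zz^\top\|_F=\|z\|_2^2\le 2$.

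\textbf{The Davis--Kahan step.} Step 3 follows the paper, but both rely on a version of Davis--Kahan that is false as stated. With only the population gap and constant $1$, the inequality $\sin\angle(\widehat v_1,v_1)\le\|\widehat B-B\|_2/\gamma$ can fail.

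Take $B=\mathrm{diag}(1,0)$ and $\widehat B=\begin{bmatrix}0.68&0.24\\0.24&0.32\end{bmatrix}$. Then $\gamma=1$ and $\|\widehat B-B\|_2=0.4$. Yet the top eigenvector of $\widehat B$ is $(2,1)/\sqrt5$, so $\sin\angle=1/\sqrt5\approx0.447$.

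This configuration also occurs, approximately and with positive probability, in the present setting. For example, take $\Phi_B\equiv 0$, so that $B_\lambda=\mathrm{diag}(G_{AA},0)$ and $\widehat B_\lambda=\mathrm{diag}(\widehat G_{AA},0)$. Let $\Phi_A$ take three unit values, with $G_{AA}$ near $\mathrm{diag}(1,0)$ and a finite sample whose empirical covariance equals the matrix above.

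The classical $\sin\Theta$ theorem uses the mixed separation $\lambda_1(B_\lambda)-\lambda_2(\widehat B_\lambda)$. There are two standard corrections:
\begin{itemize}
\item Combining the classical theorem with Weyl's inequality, as you hint, gives $\|\widehat B_\lambda-B_\lambda\|_2/(\gamma_\lambda-\|\widehat B_\lambda-B_\lambda\|_2)$ when the denominator is positive.
\item The Yu--Wang--Samworth variant gives $2\|\widehat B_\lambda-B_\lambda\|_2/\gamma_\lambda$ using the population gap alone.
\end{itemize}
The second claim needs one of these corrections. Citing Davis--Kahan ``directly'' does not establish it as written.
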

\begin{proof}
    We present the proof in the Appendix.
\end{proof}

\begin{algorithm}[t]
\caption{KODA: Kernel Optimization for Discrepancy Analysis}
\label{alg:koda}
\begin{algorithmic}[1]
\REQUIRE PSD kernels $K_A\,,K_B\,\in\mathbb{R}^{n\times n}$, threshold $\epsilon>0$, number of directions $T$, tolerance $\tau$
\ENSURE Directions $x_1,\ldots,x_T\in\mathbb{R}^n$
\STATE $U\leftarrow [\ ]$ (null matrix) 
\FOR{$t=1,\ldots,T$}
  \STATE $P\leftarrow I-UU^\top$
  \STATE $A\leftarrow PK_A\,P$, \;\; $B\leftarrow PK_B\,P$
  \STATE Choose $\lambda_t\ge 0$ by a 1D search using: compute a leading eigenvector $x_\lambda$ of $A-\lambda B$ with $\|x_\lambda\|_2=1$, and evaluate $g(\lambda)=x_\lambda^\top B x_\lambda$
  \STATE Stop when $g(\lambda_t)\le \epsilon+\tau$ and set $x_t\leftarrow x_{\lambda_t}$
  \STATE Orthonormalize: $U\leftarrow \mathrm{orth}([U\;\;x_t])$
\ENDFOR

\STATE \textbf{Return} $\{x_t\}_{t=1}^T$
\end{algorithmic}
\end{algorithm}

\section{KODA for Multi-modal embeddings via product kernels and random features}
\label{sec:multimodal}

We extend KODA to the comparison of multi-modal embeddings, where each reference item consists of paired observations from different modalities. We first introduce a product-kernel formulation that induces a joint similarity matrix on paired samples. We then address the computational challenge posed by the tensor-product feature space of product kernels by proposing a joint random Fourier feature approximation, and establish a guarantee on the stability of the leading eigenspaces used by KODA.

\subsection{Product-kernel formulation and tensor-product bottleneck}

For each reference sample that paired is $z_i=(x_i,t_i)$, let
\[
u_i=\psi_x(x_i)\in\mathbb{R}^{d_x},
\qquad
v_i=\psi_t(t_i)\in\mathbb{R}^{d_t}.
\]
We consider normalized shift-invariant kernels for each modality as follows where $\kappa_x(0)=\kappa_t(0)=1$:
\begin{align}
\label{eq:unimodal_kernels}
k_x(u,u') = \kappa_x(u-u'), \;\; k_t(v,v') = \kappa_t(v-v')
\end{align}
Then, we define the multi-modal product kernel as:
\begin{align}
\label{eq:product_kernel_mm}
k\big((u,v),(u',v')\big)
&= k_x(u,u')\,k_t(v,v') \nonumber\\
&= \kappa_x(u-u')\,\kappa_t(v-v').
\end{align}
For a reference set $\{(u_i,v_i)\}_{i=1}^n$, the corresponding kernel matrix satisfies $K= K_x \odot K_t$ where
\begin{align}
\label{eq:hadamard_kernel} 
(K_x)_{ij} = k_x(u_i,u_j),\quad 
(K_t)_{ij} = k_t(v_i,v_j),
\end{align}
with $\odot$ denoting the Hadamard product. At the kernel level, KODA applies directly by replacing unimodal kernels in Section~\ref{sec:koda} with $K$.

The challenge arises in covariance-based implementations: the feature map associated with \eqref{eq:product_kernel_mm} is the tensor product
\[
\phi(u,v) = \phi_x(u)\otimes\phi_t(v),
\]
whose ambient dimension scales as $d_x d_t$. This renders covariance-space eigen-computations infeasible for standard multi-modal embeddings, motivating a low-dimensional kernel approximation.

\subsection{Joint random Fourier features}
\label{formula: joint RFF}
Since $\kappa_x$ and $\kappa_t$ are continuous, real-valued, and shift-invariant, Bochner’s theorem yields probability measures $\widehat{\kappa}_x$ and $\widehat{\kappa}_t$ such that
\begin{align}
\label{eq:bochner_real_mm}
\kappa_x(\delta)
&= \int_{\mathbb{R}^{d_x}} \widehat{\kappa}_x(\omega_x) \cos(\omega_x^\top \delta)\,
\mathrm{d}\omega_x, \nonumber\\
\kappa_t(\zeta)
&= \int_{\mathbb{R}^{d_t}}\widehat{\kappa}_t(\omega_t) \cos(\omega_t^\top \zeta)\,
\mathrm{d}\omega_t.
\end{align}
We sample $r$ frequency vectors independently for each modality:
\[
\omega_{x,1},\ldots,\omega_{x,r}\stackrel{\text{\rm iid}}{\sim} \widehat{\kappa}_x,
\quad
\omega_{t,1},\ldots,\omega_{t,r}\stackrel{\text{\rm iid}}{\sim} \widehat{\kappa}_t,
\]
and define the joint random Fourier feature map
\begin{align}
\label{eq:joint_rff}
\varphi(u,v)
= \frac{1}{\sqrt{r}}
\Bigl[
&\cos(\omega_{x,1}^\top u+\omega_{t,1}^\top v),
\sin(\omega_{x,1}^\top u+\omega_{t,1}^\top v), \nonumber\\
\ldots,
\cos(\omega_{x,r}^\top u+&\omega_{t,r}^\top v),
\sin(\omega_{x,r}^\top u+\omega_{t,r}^\top v)
\Bigr] \in \mathbb{R}^{2r}.
\end{align}
Let $\Phi\in\mathbb{R}^{n\times 2r}$ contain rows $\Phi_{i:}=\varphi(u_i,v_i)^\top$. The approximate kernel matrix is
\begin{align}
\label{eq:Ktilde_mm}
\widetilde{K}
&= \Phi\Phi^\top \quad \text{where}\quad \widetilde{K}_{ij}
= \langle \varphi(u_i,v_i),\varphi(u_j,v_j)\rangle.
\end{align}
Next, we show a theoretical guarantee supporting scalable multi-modal KODA via the joint random Fourier feature implementation:
\begin{theorem}
\label{thm:mm_rff_eigenspace}
Let $K$ be defined in \eqref{eq:hadamard_kernel} and $\widetilde{K}$ in \eqref{eq:Ktilde_mm}.
Assume $|K_{ij}|\le 1$ for all $i,j$.
Then for any $\delta\in(0,1)$, with probability at least $1-\delta$,
\begin{align}
\label{eq:frob_bound_new}
\bigl\|\frac{1}{n}\widetilde{K}-\frac{1}{n}K\bigr\|_F
\;\le\;
\frac{2+\sqrt{8\log(1/\delta)}}{\sqrt r}
\end{align}
Moreover, for any $q$ with eigengap $\Delta_q(K)=\lambda_q(K)-\lambda_{q+1}(K)>0$, letting
$U$ and $\widetilde{U}$ denote the top-$q$ eigenspaces of $K$ and $\widetilde{K}$ respectively,
\begin{align}
\label{eq:davis_kahan_bound_new}
\bigl\|\sin\Theta(\widetilde{U},U)\bigr\|_F
\;\le\;
\frac{\|\widetilde{K}-K\|_F}{\Delta_q(K)}.
\end{align}
\end{theorem}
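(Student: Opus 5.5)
The plan has three steps: show that the joint random features are unbiased for the product kernel, obtain the Frobenius bound from a Hilbert-space concentration argument, and then apply a Davis--Kahan theorem.

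\textbf{Step 1: unbiasedness.} Write $\varphi(u,v)=\frac{1}{\sqrt r}[\xi_1(u,v),\ldots,\xi_r(u,v)]$ with blocks $\xi_\ell=(\cos\theta_\ell,\sin\theta_\ell)$, where $\theta_\ell=\omega_{x,\ell}^\top u+\omega_{t,\ell}^\top v$. Then
\[
\langle\varphi(u_i,v_i),\varphi(u_j,v_j)\rangle=\frac1r\sum_{\ell=1}^r \cos\bigl(\omega_{x,\ell}^\top(u_i-u_j)+\omega_{t,\ell}^\top(v_i-v_j)\bigr).
\]
Since $\omega_{x,\ell}$ and $\omega_{t,\ell}$ are independent, the expectation of the complex exponential factorizes:
\[
\mathbb{E}\,e^{\mathrm{i}(\omega_x^\top\delta+\omega_t^\top\zeta)}=\kappa_x(\delta)\,\kappa_t(\zeta).
\]
This product is real because each $\kappa$ is real and even. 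Taking real parts gives $\mathbb{E}[\widetilde K_{ij}]=K_{ij}$. So $\frac1n\widetilde K=\frac1r\sum_\ell Z_\ell$, where the $Z_\ell$ are i.i.d. $n\times n$ matrices with entries $(Z_\ell)_{ij}=\frac1n\cos(\cdot)$ and $\mathbb{E}Z_\ell=\frac1n K$.

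\textbf{Step 2: Frobenius bound.} View each $Z_\ell$ as an element of the Hilbert space $(\mathbb{R}^{n\times n},\|\cdot\|_F)$. Each entry has absolute value at most $1/n$, so $\|Z_\ell\|_F\le 1$; by the assumption $|K_{ij}|\le1$ we also have $\|\frac1nK\|_F\le1$. Hence the centered summands satisfy $\|Z_\ell-\mathbb{E}Z_\ell\|_F\le 2$.

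For the expectation, use the standard variance computation for a Hilbert-space average:
\[
\mathbb{E}\Bigl\|\tfrac1r\textstyle\sum_\ell (Z_\ell-\mathbb{E}Z_\ell)\Bigr\|_F\le\sqrt{\mathbb{E}\|Z_1-\mathbb{E}Z_1\|_F^2/r}\le 1/\sqrt r\le 2/\sqrt r.
\]
The deviation $f(\omega_1,\ldots,\omega_r)=\|\frac1r\sum_\ell Z_\ell-\frac1nK\|_F$ has bounded differences: replacing one $\omega_\ell$ changes $f$ by at most $\frac1r\|Z_\ell-Z_\ell'\|_F\le 2/r$. McDiarmid's inequality then gives, with probability at least $1-\delta$,
\[
f\le \mathbb{E}f+\sqrt{\tfrac{r(2/r)^2}{2}\log(1/\delta)}=\mathbb{E}f+\sqrt{2\log(1/\delta)/r}.
\]
This is already below the stated $\sqrt{8\log(1/\delta)/r}$, so \eqref{eq:frob_bound_new} holds, with slack that absorbs any loose constants.

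\textbf{Step 3: eigenspace perturbation.} $K$ and $\widetilde K$ are symmetric; $K$ is PSD as a Hadamard product of PSD matrices, though only symmetry matters here. Apply the Davis--Kahan $\sin\Theta$ theorem to the top-$q$ eigenspaces with perturbation $E=\widetilde K-K$. The version with $\|E\|_F$ in the numerator and the gap $\Delta_q(K)$ measured on the unperturbed matrix is the Yu--Wang--Samworth variant. That variant gives a factor $2\|E\|_F/\Delta_q(K)$, or $2\min(\sqrt q\|E\|_2,\|E\|_F)/\Delta_q(K)$ in full form.

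This step is the main obstacle. Obtaining the stated constant $1$ with only $K$'s gap requires care: the classical Davis--Kahan theorem needs a separation between the spectrum of $K$ on $U$ and that of $\widetilde K$ on $\widetilde U^\perp$. I would first handle the case $\|E\|_2\le\Delta_q(K)/2$ via Weyl's inequality and the classical $\sin\Theta$ theorem. If that does not recover the constant, I would fall back on the Yu--Wang--Samworth form and note that the bound is unchanged up to a factor of $2$. In the other case the bound is trivial whenever the right-hand side is at least $\sqrt q$.

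Finally, combine Step 3 with Step 2 via $\|E\|_F=n\|\frac1n\widetilde K-\frac1nK\|_F$ to get an explicit high-probability rate.
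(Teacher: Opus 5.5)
Your Steps 1 and 2 are correct and follow the paper's route. The paper also writes $\widetilde K=\frac1r\sum_\ell K^{(\ell)}$, bounds the centered summands by $2$ in Frobenius norm, and applies the Hilbert-space Hoeffding inequality of Sutherland et al.\ as a black box, obtaining exactly $\frac{2}{\sqrt r}\bigl(1+\sqrt{2\log(1/\delta)}\bigr)$. You re-derive that inequality via Jensen plus McDiarmid and gain a factor $2$. You also prove the unbiasedness that the paper takes ``by assumption''.

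Step 3 is where the trouble lies, and your suspicion is justified. Constant $1$ with only the gap of $K$ cannot be attained, because the eigenspace inequality in the statement is false. Take $n=2$, $q=1$,
\[
K=\begin{pmatrix}1&1/2\\1/2&1\end{pmatrix},\qquad
\widetilde K=\begin{pmatrix}1&-1/10\\-1/10&1\end{pmatrix}.
\]
Then $\Delta_1(K)=1$. The top eigenvectors are $(1,1)/\sqrt2$ and $(1,-1)/\sqrt2$, so $\|\sin\Theta\|_F=1$, whereas $\|\widetilde K-K\|_F=0.6\sqrt2\approx0.85$.

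This pair is realizable in the theorem's setting. $K_{12}=\kappa_x(\delta)\kappa_t(\zeta)=1/2$ for a suitable pair of samples. $\widetilde K_{12}$ is an average of $r$ cosines, which for Gaussian kernels lies in $(-0.2,0)$ with positive probability. For $r=1$ the Frobenius event holds surely, since $\|\frac1n(\widetilde K-K)\|_F\le 2$, so the high-probability event does not exclude the example.

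The paper's proof does not get around this. It invokes ``a standard Davis--Kahan bound'' $\|\sin\Theta(\widetilde U,U)\|_F\le\|\widetilde K-K\|_2/\Delta_q(K)$, which is not a valid theorem. The same example refutes it, and with the operator norm it also misses a factor $\sqrt q$.

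Your planned case split cannot close the gap either. Weyl plus the classical $\sin\Theta$ theorem give $\|(\widetilde K-K)U\|_F/\bigl(\Delta_q(K)-\|\widetilde K-K\|_2\bigr)$, not the claimed bound. The regime $\Delta_q(K)/2<\|\widetilde K-K\|_2$ with $\|\widetilde K-K\|_F<\sqrt q\,\Delta_q(K)$ is not trivial, and the example above lies exactly there.

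So your fallback is the correct conclusion, and the statement itself needs correcting. The Yu--Wang--Samworth factor $2$ works, and the sharp factor is in fact $\sqrt2$, by a short argument. Let $P,\widetilde P$ be the top-$q$ eigenprojectors. Then $\langle K,P-\widetilde P\rangle\ge\Delta_q(K)\|\sin\Theta\|_F^2$, and $\langle\widetilde K,\widetilde P-P\rangle\ge 0$ by Ky Fan. Adding these and using $\|\widetilde P-P\|_F=\sqrt2\,\|\sin\Theta\|_F$ gives $\Delta_q(K)\|\sin\Theta\|_F^2\le\langle\widetilde K-K,\widetilde P-P\rangle\le\sqrt2\,\|\widetilde K-K\|_F\,\|\sin\Theta\|_F$. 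Letting the off-diagonal entry of $\widetilde K$ in the example tend to $0^-$ shows that $\sqrt2$ cannot be improved. Alternatively, constant $1$ is valid if $\Delta_q(K)$ is replaced by the mixed gap $\lambda_q(K)-\lambda_{q+1}(\widetilde K)$, which is the form the classical theorem actually gives.
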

\begin{proof}
We present the proof in the Appendix.
\end{proof}
Theorem~\ref{thm:mm_rff_eigenspace} provides justification for using $\widetilde{K}=\Phi\Phi^\top$ as a proxy for the true product kernel matrix $K$ in multi-modal KODA. In particular, $\widetilde{K}$ concentrates around $K$ at rate $r^{-1/2}$ in Frobenius norm, and the leading eigenspaces are stable whenever $\Delta_q(K)$ is non-negligible. This enables covariance/feature-space implementations whose complexity depends on the joint random-feature dimension $2r$, avoiding explicit tensor-product feature maps of size $d_x d_t$.

\section{Numerical Results}

\begin{figure*}[t]
\begin{center}
\centerline{\includegraphics[width=17cm]
{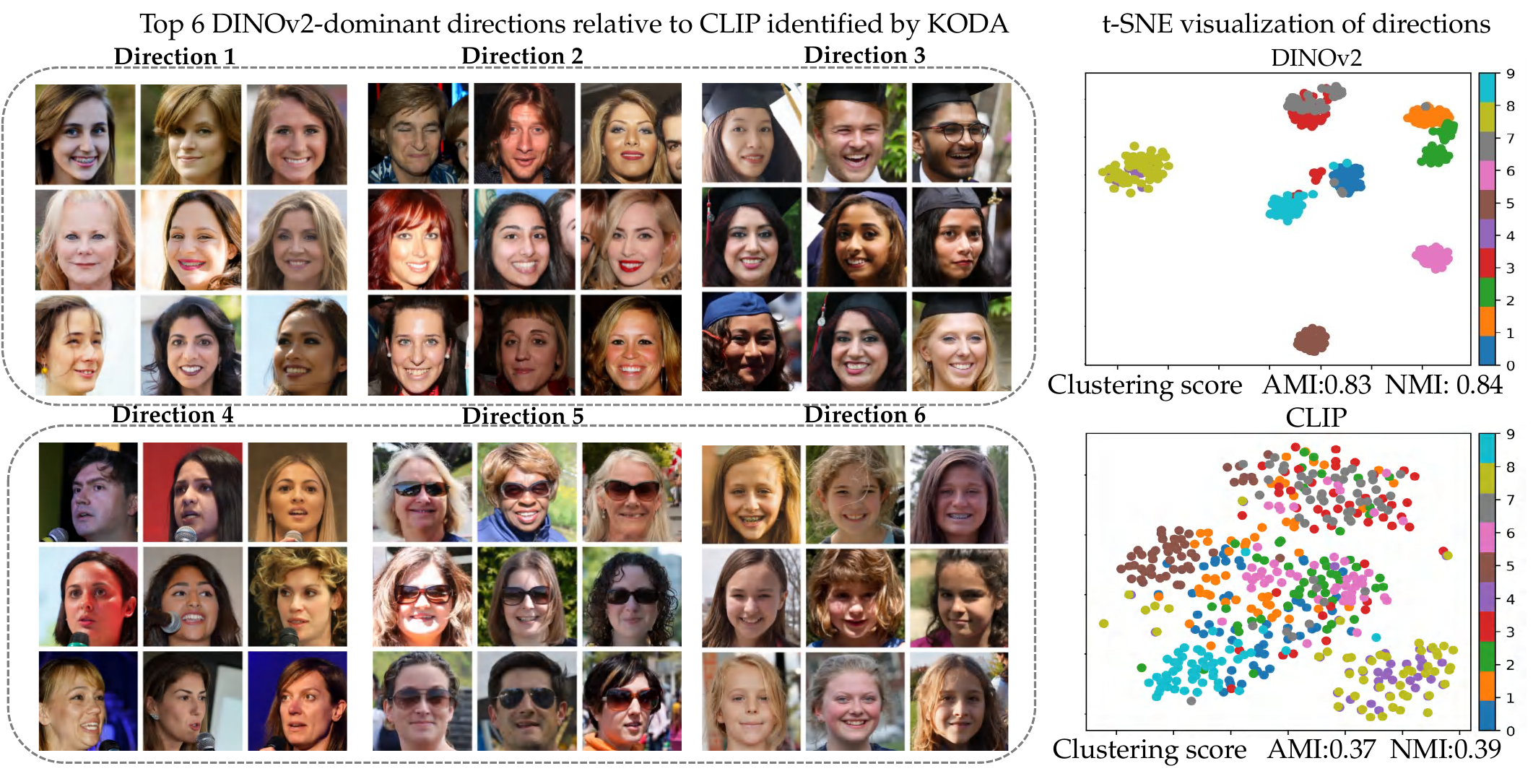}}
\caption{Left: Visualization of the top-6 discrepancy directions that are strongly grouped under DINOv2 while being weakly clustered under CLIP on the FFHQ dataset, discovered by KODA.
Right: t-SNE visualization of Top-10 directions together with clustering scores.}
\label{figs: ffhq_6}
\end{center}
\vskip -0.2in
\end{figure*}


In this section, we evaluate KODA through two complementary tasks. 
The first task, \emph{contrastive embedding clustering}, asks whether KODA can identify sample groups that are coherently clustered under one embedding but weakly clustered under another. 
The second task, \emph{contrastive embedding alignment}, asks whether the discovered contrastive clusters can be used as actionable slices for targeted alignment between embeddings. 
Finally, we provide ablation studies on key design choices in KODA.

\textbf{Datasets.}
We evaluate our method on a diverse collection of image-only and image--text datasets to assess discrepancy discovery under both unimodal and multimodal settings.
For image-only experiments, we use AFHQ~\cite{afhq}, FFHQ~\cite{ffhq}, and ImageNet~\cite{deng2009imagenet}.
For multimodal experiments, we adopt standard image-caption datasets MSCOCO~\cite{Lin2014COCO}.

\textbf{Models.}
For unimodal discrepancy analysis, we consider two widely adopted visual encoders, DINOv2~\cite{oquab2023dinov2} and CLIP~\cite{pmlr-v139-radford21a}. 
For multimodal experiments, we evaluate a diverse set of vision-language models, including BLIP~\cite{pmlr-v162-li22n}, CLIP~\cite{pmlr-v139-radford21a}, OpenCLIP~\cite{ilharco_gabriel_2021_5143773}, SigLIP~\cite{zhai2023sigmoid}, and SigLIP2~\cite{tschannen2025siglip}. 

\textbf{Implementation details.}
All experiments are conducted using the covariance-operator formulation of KODA, with spectral computations solved via Cholesky decomposition. We adopt Gaussian (RBF) kernels and kernel bandwidths are selected following prior work~\cite{pmlr-v235-zhang24bh}~\cite{pmlr-v267-jalali25a} to ensure comparable scaling across embeddings.
Further implementation details are provided in~\ref{app:exp_details}.

\begin{figure*}
\vskip 0.2in
\begin{center}
\centerline{\includegraphics[width=17cm]
{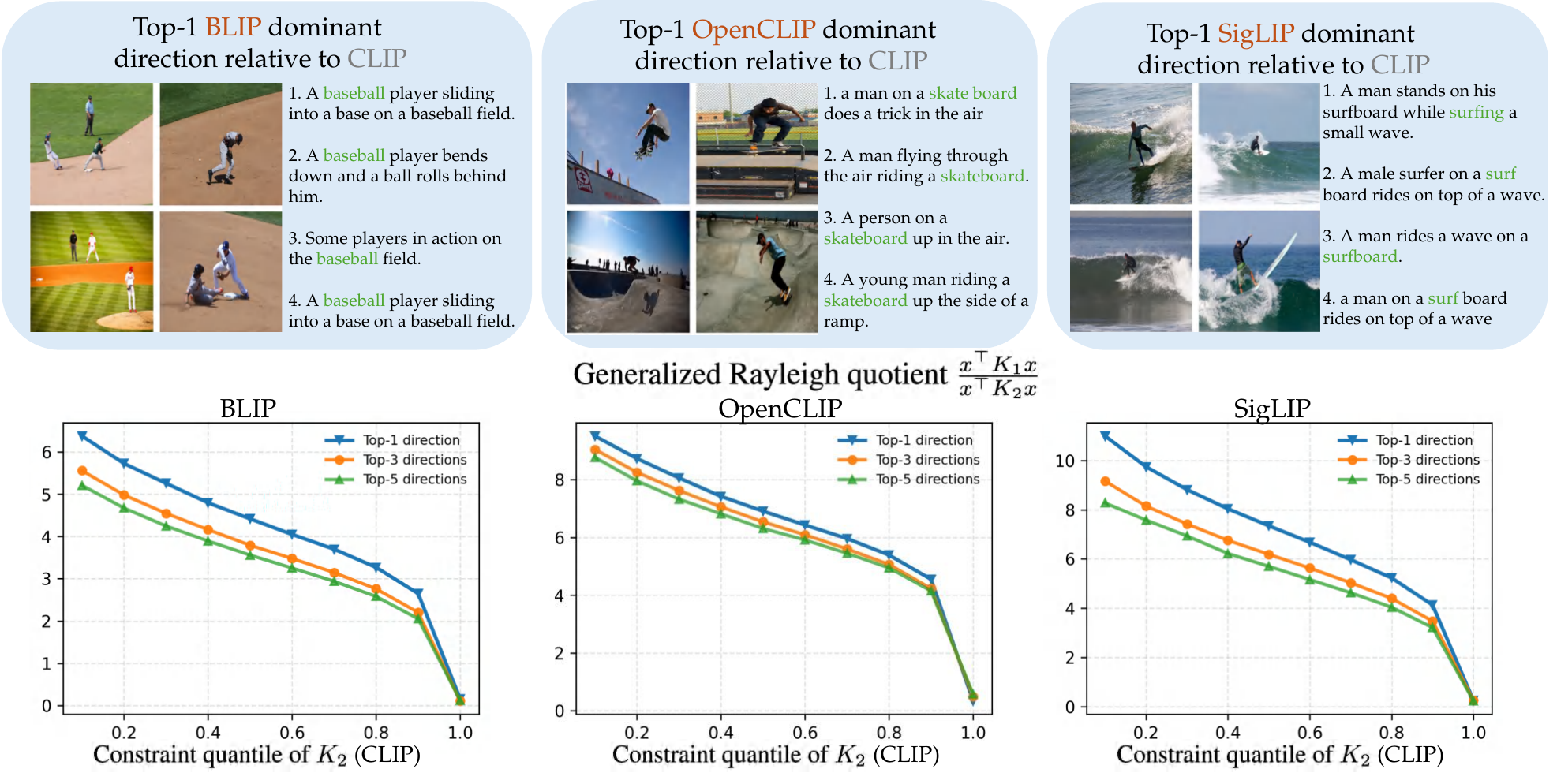}}
\caption{Multimodal discrepancy analysis on the MSCOCO dataset.
\textbf{Top:} Representative image--caption pairs corresponding to the Top-1 discrepancy direction identified by KODA for different vision--language models relative to CLIP.
\textbf{Bottom:} Generalized Rayleigh quotient of the identified discrepancy directions under varying constraint quantiles defined on the CLIP kernel. 
}
\label{figs: multimodal summary}
\end{center}
\vskip -0.2in
\end{figure*}

\textbf{Contrastive embedding clustering in unimodal encoders.}
We begin with the contrastive embedding clustering task for two image encoders, DINOv2 and CLIP, on the AFHQ and FFHQ datasets. 
\cref{figs: ffhq_6} illustrates the dominant discrepancy directions identified by KODA that are strongly grouped under DINOv2 while being weakly clustered under CLIP.
For visualization, we select 50 representative samples per direction and project their embeddings using t-SNE~\cite{tsne} under each model.
To quantify the identified directions, we run $k$-means on the corresponding embeddings with 10 times and report the averaged Adjusted Mutual Information~\cite{ami} (AMI) and Normalized Mutual Information~\cite{nmi} (NMI) between the $k$-means labels and the KODA-discovered labels. The results on AFHQ datasets are provided in ~\cref{figs:afhq_dinov2-clip} and~\cref{figs:afhq_clip-dinov2}.


\textbf{Consistency with reference discrepancy structures.}
We further examine whether the discrepancy directions identified by KODA align with semantic mismatches derived from representation similarity statistics. We use the ImageNet dog breeds dataset since it provides category labels that enable explicit verification. We derive ground-truth discrepancy labels based on aggregated similarity statistics. Without using any label information, KODA recovers dominant discrepancy directions that closely correspond to these mismatched categories as shown in \cref{figs: kernel_heatmap_match}. Additional results and visualizations are provided in Appendix~\ref{apps: sanity check}.

\begin{figure*}[t]
\begin{center}
\centerline{\includegraphics[width=14cm]
{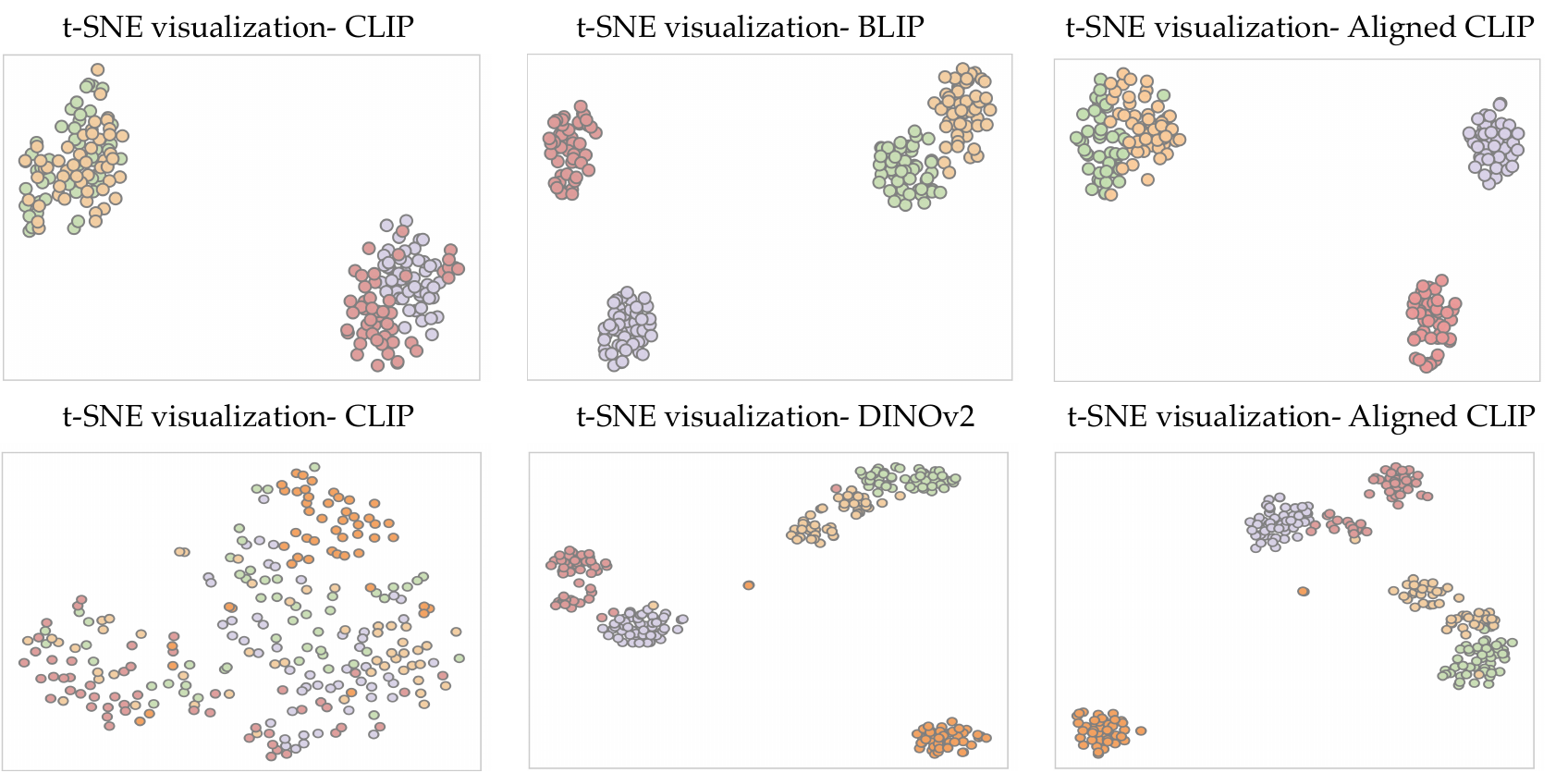}}
\caption{t-SNE visualization of KODA-selected samples before and after contrastive embedding alignment.}\vspace{-3mm}
\label{fig:alignment_tsne}
\end{center}
\end{figure*}

\begin{table*}[t]
\centering
\caption{
Cross-modal retrieval on the full MSCOCO set and the KODA-selected subset. The KODA-selected subset amplifies the performance gap between CLIP and SigLIP.
}
\label{tab:koda_retrieval_slice}
\resizebox{\textwidth}{!}{
\begin{tabular}{llccccccc}
\toprule
Model & Evaluation set 
& I2T R@1 & I2T R@5 & I2T R@10 
& T2I R@1 & T2I R@5 & T2I R@10 
& Avg. drop \\
\midrule
CLIP & Full & 32.64 & 57.88 & 68.10 & 28.60 & 53.04 & 64.46 & 0.00 \\
CLIP & KODA-selected & 18.00 & 44.00 & 55.00 & 19.00 & 46.00 & 56.00 & $11.12$ \\
\midrule
SigLIP & Full & 42.64 & 68.32 & 77.98 & 41.86 & 66.42 & 76.22 & 0.00 \\
SigLIP & KODA-selected & 34.00 & 68.00 & 78.00 & 35.00 & 62.00 & 73.00 & $3.91$ \\
\bottomrule
\end{tabular}
}
\end{table*}

\textbf{Contrastive embedding clustering in vision--language models.} We next evaluate KODA on contrastive embedding clustering for paired image--text data. All multimodal experiments are conducted on MSCOCO using the joint image--text representation described in~\cref{formula: joint RFF}.
We consider a set of widely used vision--language models, including BLIP, CLIP, OpenCLIP, SigLIP, and SigLIP2, and perform pairwise discrepancy analysis.
As shown in Figure~\ref{figs: multimodal summary} (top), the resulting samples exhibit distinct multimodal patterns across different models by fixing CLIP as the reference model. Additional comparing results and visualization across different models are provided in Appendix~\ref{app:additional_multimodal}.

\textbf{Quantifying Directional Asymmetry via the Generalized Rayleigh Quotient.} We further quantify the strength of multimodal discrepancy directions using the generalized Rayleigh quotient $\frac{x^\top K_1 x}{x^\top K_2 x}$, where $K_1$ and $K_2$ are normalized RBF kernel matrices induced by the two embeddings.
Since $x^\top K x$ measures how strongly direction $x$ is expressed under kernel $K$, larger quotient values indicate stronger directional asymmetry, i.e., directions emphasized by $K_1$ but suppressed by $K_2$. Following Eq.~(7), the constraint parameter $\epsilon$ is set \emph{implicitly} via a quantile $q\in\{0.1,0.2,\ldots,1.0\}$ of the eigenvalue distribution of $K_2$, where $\epsilon$ corresponds to the $q$-quantile of $K_2$'s eigenvalues.
Figure~\ref{figs: multimodal summary} (bottom) reports the quotient values for the Top-1 discrepancy direction as well as the averages over the Top-3 and Top-5 directions.

\textbf{Contrastive embedding alignment using KODA-identified samples.}
To examine whether the discovered discrepancy slices are useful beyond visualization, we use them for targeted embedding alignment.
In the unimodal setting, KODA identifies slices that are weakly grouped by CLIP but strongly grouped by DINOv2 on FFHQ dataset. 
We fine-tune CLIP on these selected samples to align its local geometry with DINOv2, following a kernel-based embedding-alignment objective of ~\cite{gong2025kernel}. 
Table~\ref{tab:koda_alignment} shows that the aligned CLIP substantially improves its agreement with the KODA-discovered grouping and approaches the DINOv2 geometry on these slices.
We observe a similar trend in the multimodal setting. 
On MSCOCO, KODA identifies image--caption pairs for which BLIP forms a clearer joint structure than CLIP. The corresponding t-SNE visualization in~\cref{fig:alignment_tsne} further shows that the aligned embedding forms a geometry closer to the target embedding on the same selected slice.

\begin{table}[t]
\centering
\caption{
Contrastive embedding alignment on KODA-selected slices.
AMI/NMI/ARI measure agreement between the embedding-induced clusters and the KODA-discovered grouping.
}
\label{tab:koda_alignment}
\setlength{\tabcolsep}{3.5pt}
\begin{tabular}{llccc}
\toprule
Target & Model & AMI & NMI & ARI \\
\midrule
\multirow{3}{*}{DINOv2}
& CLIP & $0.25{\pm}.002$ & $0.26{\pm}.002$ & $0.19{\pm}.001$ \\
& Aligned & $0.78{\pm}.004$ & $0.78{\pm}.004$ & $0.70{\pm}.012$ \\
& Target & $0.83{\pm}.006$ & $0.84{\pm}.006$ & $0.77{\pm}.017$ \\
\midrule
\multirow{3}{*}{BLIP}
& CLIP & $0.52{\pm}.003$ & $0.53{\pm}.009$ & $0.37{\pm}.004$ \\
& Aligned & $0.91{\pm}.006$ & $0.91{\pm}.008$ & $0.90{\pm}.003$ \\
& Target & $0.96{\pm}.008$ & $0.96{\pm}.005$ & $0.96{\pm}.006$ \\
\bottomrule
\end{tabular}
\end{table}

\textbf{Multimodal discrepancy reflects cross-modal alignment differences.}
We further examine whether the multimodal discrepancies found by KODA mainly arise from cross-modal alignment differences or from general inter-model mismatch.
Comparing image-only and joint image--text KODA directions, we find that image-only embeddings produce noisier directions with weaker semantic separation, while joint image-text embeddings yield more coherent and concentrated clusters.
Visual comparisons are in~\cref{fig:multimodal_source_analysis,fig:multimodal_source_analysis_1}.
We also evaluate image-to-text and text-to-image retrieval on MSCOCO, comparing the KODA-selected subset with the full dataset. As in~\cref{tab:koda_retrieval_slice}, the KODA-selected slice amplifies the retrieval gap between SigLIP and CLIP.

\textbf{Ablation Study.}
To examine the sensitivity of KODA to major design parameters, we conduct ablation studies on the number of random Fourier features, the reference sample size, and the kernel function, discussed in Appendix~\ref{app:ablation}.

\section{Conclusion and Limitations}
In this work, we introduced \emph{Contrastive Embedding Clustering} as a task for identifying sample-level structures that are organized differently across two embedding representations, and proposed KODA as a constrained kernel-based framework for solving this task. By formulating embedding comparison as a quadratic optimization problem with an explicit constraint on weak clusterability under a reference embedding, KODA directly localizes discrepancy directions that are strongly clustered in one embedding but diffuse or suppressed in the other, going beyond global kernel-difference spectral comparisons. We showed that the resulting non-convex problem admits efficient eigenvector-based characterization and developed scalable implementations through covariance-block reductions and random feature approximations. Across experiments, KODA identified fine-grained and interpretable contrastive clusters across multi-modal and uni-modal embeddings. As limitations, KODA requires a shared reference dataset and focuses on unsupervised discrepancy discovery through kernel-induced geometry; extending the framework to unmatched datasets, supervised or task-conditioned discrepancy notions, and stronger statistical guarantees for cluster interpretation are relevant directions for future work.


\clearpage
\clearpage

\section*{Acknowledgments}
This work is supported by a grant from the Research Grants Council of the Hong Kong Special Administrative Region, China, Project 14210725, and is also supported by CUHK Direct Research Grant with CUHK Project No. 4055164. The work is partially supported by a grant under 1+1+1 CUHK-CUHK(SZ)-GDSTC Joint Collaboration Fund. Also, the authors acknowledge the support from the Hong Kong Research Grants Council (RGC) and the Hong Kong PhD Fellowship Scheme (HKPFS) award supporting Youqi Wu's research.  Finally, the authors sincerely thank the anonymous reviewers and meta-reviewer for their insightful  suggestions and constructive feedback.

\section*{Impact Statement}
This work develops methods for comparing multi-modal embedding representations by identifying dataset-level discrepancy patterns between models. A positive impact is to support transparency and interpretability in the evaluation of widely used embedding interfaces, enabling more informed model selection, debugging, and analysis beyond aggregate benchmark metrics. As with other representation analysis tools, such methods could also be misused to exploit model-specific weaknesses or to support undesirable downstream applications if applied without appropriate safeguards. We therefore emphasize that KODA is intended for controlled evaluation and auditing purposes on shared reference datasets, and we encourage careful consideration of dataset provenance, privacy, and downstream use when applying embedding comparison techniques in practice.


\bibliography{reference}
\bibliographystyle{icml2026}

\newpage
\appendix
\onecolumn
\section{Additional Related Works}

\textbf{Beyond downstream benchmarks: representation-level comparison.}
Recent work has explored task-agnostic viewpoints on embedding comparison that deviate from pure downstream evaluation. \citet{Darrin2024PromisingEmbeddings} propose an information-theoretic approach to comparing embedding models based on notions of sufficiency/informativeness, enabling comparison without labeled tasks. The Platonic Representation Hypothesis~\cite{pmlr-v235-huh24a} studies the extent to which independently trained models share representational structure, offering a complementary lens on when embeddings may share geometric organization.

\textbf{Kernel and spectral tools.}
Kernel spectral methods provide classical machinery for studying grouping structure through eigensystems~\citet{chitta2012efficient,ghashami2016streaming,ullah2018streaming,sriperumbudur2022approximate,gedon2023invertible}. Kernel PCA~\cite{scholkopf1998nonlinear} and spectral clustering~\cite{ng2001spectral} relate eigenstructure of similarity matrices to latent clusters, while Random Fourier Features~\cite{Rahimi2007RandomFeatures} enable scalable approximations for shift-invariant kernels. Building on these tools, spectral kernel-based embedding comparison methods have been proposed, including analyzing kernel differences constructed from a shared reference dataset~\cite{pmlr-v267-jalali25a}. Our work follows this kernel-based, reference-set comparison paradigm but adopts an optimization-based formulation that explicitly enforces weak grouping under one embedding while maximizing grouping strength under another.

\section{Proofs}

\subsection{Proof of Proposition~\ref{prop:kkt_eig}}

Consider Lagrangian multipliers $\lambda\ge 0$ for $x^\top K_B\, x\le \epsilon$ and $\nu\in\mathbb{R}$ for $\|x\|_2^2=1$, and consider
\[
\mathcal{L}(x,\lambda,\nu)=x^\top K_A\, x-\lambda(x^\top K_B\, x-\epsilon)-\nu(x^\top x-1).
\]
At a KKT point $(x^\star,\lambda^\star,\nu^\star)$, stationarity gives
$2(K_A\,-\lambda^\star K_B\,-\nu^\star I)x^\star=0$, yielding \eqref{eq:kkt_stationarity}; complementary slackness yields \eqref{eq:kkt_slack}. The strict feasibility assumption is a standard constraint qualification for the inequality constraint on the sphere.

\subsection{Proof of Proposition~\ref{prop:proj_reduction}}
First, note that $U_{t-1}^\top x=0$ is equivalent to $x=P_{t-1}x$, which directly follows from the definition of $P_{t-1}$. For such $x$ and any symmetric $D$, we have
$x^\top D x = x^\top (P_{t-1}DP_{t-1})x$ since $P_{t-1}=P_{t-1}^\top=P_{t-1}^2$. Applying this to $D=K_A\,,K_B\,$ yields the equivalence.

\subsection{Proof of Proposition~\ref{prop:principal_block}}

First note that
\begin{align*}
K_A\,-\lambda K_B\,
&= \Phi_A\Phi_A^\top - \lambda \Phi_B\Phi_B^\top
= \Phi S_\lambda \Phi^\top .
\end{align*}
For any $u\in\mathbb{R}^{d_1+d_2}$,
\begin{align*}
(K_A\,-\lambda K_B\,)(\Phi u)
&= \Phi S_\lambda \Phi^\top \Phi u
= \Phi S_\lambda (\Phi^\top\Phi) u
= \Phi M_\lambda u.
\end{align*}
Hence, if $M_\lambda u_\lambda=\eta_\lambda u_\lambda$, then with $x_\lambda=\Phi u_\lambda$ we have
\[
(K_A\,-\lambda K_B\,)x_\lambda=\eta_\lambda x_\lambda,
\]
thus $x_\lambda$ is an eigenvector of $K_A\,-\lambda K_B\,$ with eigenvalue $\eta_\lambda$. Moreover, $\eta_\lambda\neq 0$ implies $x_\lambda\neq 0$: if $\Phi u_\lambda=0$ then $G u_\lambda=\Phi^\top\Phi u_\lambda=0$ and thus $M_\lambda u_\lambda=S_\lambda G u_\lambda=0$, forcing $\eta_\lambda=0$.

It remains to show that $\eta_\lambda$ is also an eigenvalue of $M_\lambda$. This follows from the fact that for every matrix pair $A\in\mathbb{R}^{m\times n}$ and $B\in\mathbb{R}^{n\times m}$ (for any integers $m,n$), $AB$ and $BA$ share the same non-zero eigenvalues (including multiplicities). Applying this to the case $A=\Phi$ and $B=S_\lambda\Phi^\top$ shows that $\Phi S_\lambda \Phi^\top = K_A\,-\lambda K_B\,$ and $S_\lambda\Phi^\top\Phi = M_\lambda$ share the same non-zero eigenvalues. Since $K_A\,-\lambda K_B\,$ is symmetric, its eigenvalues are real, and particularly its largest eigenvalue $\eta_\lambda=\lambda_{\max}(K_A\,-\lambda K_B\,)$ is among the eigenvalues of $M_\lambda$. Therefore, $u_\lambda$ can be chosen as an eigenvector of $M_\lambda$ associated with $\eta_\lambda$, and the lifted vector $x_\lambda=\Phi u_\lambda$ is a principal eigenvector of $K_A\,-\lambda K_B\,$.

\subsection{Proof of Theorem~\ref{thm:cov_block_sample_complexity}}

Let $D:=d_1+d_2$. Define $z(X):=[\phi_1(X);\phi_2(X)]\in\mathbb{R}^{D}$ and assume
$\big\|z(X)\big\|_2^2=\big\|\phi_1(X)\big\|_2^2+\big\|\phi_2(X)\big\|_2^2\le 2$ almost surely.
Let
\[
G:=\mathbb{E}[z(X)z(X)^\top]\succeq 0,
\qquad
\widehat G := \frac1n\sum_{i=1}^n z(X_i)z(X_i)^\top \succeq 0,
\]
and for $\lambda\ge 0$ define $S_\lambda:=\mathrm{diag}(I_{d_1},-\lambda I_{d_2})$,
\[
B_\lambda := G^{1/2}S_\lambda G^{1/2},
\qquad
\widehat B_\lambda := \widehat G^{1/2}S_\lambda \widehat G^{1/2}.
\]
Note $B_\lambda$ and $\widehat B_\lambda$ are symmetric.

First, we prove a Frobenius concentration result for $\widehat G$ using the Hilbert-space Hoeffding's inequality. 
Let $W_i:=z(X_i)z(X_i)^\top$ and $Z_i:=W_i-G$, and therefore we have $\mathbb{E}[Z_i]=0$ and
$\widehat G-G=\frac1n\sum_{i=1}^n Z_i$.
We bound $\big\|Z_i\big\|_F$ almost surely.

First, for any vector $a$, $\big\|aa^\top\big\|_F=\big\|a\big\|_2^2$:
indeed,
$\big\|aa^\top\big\|_F^2=\mathrm{Tr}((aa^\top)^\top(aa^\top))=\mathrm{Tr}(aa^\top aa^\top)=\big\|a\big\|_2^4$.
Thus,
\[
\big\|W_i\big\|_F=\big\|z(X_i)z(X_i)^\top\big\|_F=\big\|z(X_i)\big\|_2^2\le 2 \quad\text{a.s.}
\]
Also, by Jensen's inequality and the triangle inequality for the Frobenius norm,
\[
\big\|G\big\|_F = \big\|\mathbb{E}[W_i]\big\|_F \le \mathbb{E}\big\|W_i\big\|_F \le 2.
\]
Therefore,
\[
\big\|Z_i\big\|_F=\big\|W_i-G\big\|_F \le \big\|W_i\big\|_F+\big\|G\big\|_F \le 4 \quad\text{a.s.}
\]

We now apply the Hoeffding-type inequality for random vectors in Hilbert spaces \cite{sutherland2018efficient}
 to the i.i.d.\ Hilbert-space-valued variables $Z_i$
in the Hilbert space of $D\times D$ matrices equipped with Frobenius norm.
With $L=4$, we obtain: for any $\delta\in(0,1)$, with probability at least $1-\delta$,
\begin{align}
\label{eq:G_frob_conc_app}
\big\|\widehat G-G\big\|_F
=
\Bigl\|\frac1n\sum_{i=1}^n Z_i\Bigr\|_F
\le
\frac{4}{\sqrt n}\Bigl(1+\sqrt{2\log\tfrac1\delta}\Bigr)
=: \eta_n(\delta).
\end{align}

Subsequently, we apply the Powers--St{\o}rmer inequality showing that for PSD matrices $A,B\succeq 0$ we have
\begin{align}
\label{eq:powers_stormer_app}
\|A^{1/2}-B^{1/2}\|_F^2 \le \|A-B\|_*,
\end{align}
where $\|\cdot\|_*$ is the nuclear (trace) norm.
Applying \eqref{eq:powers_stormer_app} to $A=\widehat G$ and $B=G$ yields
\[
\|\widehat G^{1/2}-G^{1/2}\|_F \le \|\widehat G-G\|_*^{1/2}.
\]
For any $D\times D$ matrix $X$, $\|X\|_* \le \sqrt{\mathrm{rank}(X)}\,\|X\|_F \le \sqrt{D}\,\|X\|_F$.
Therefore,
\begin{align}
\label{eq:sqrt_frob_app}
\|\widehat G^{1/2}-G^{1/2}\|_F
\le
D^{1/4}\,\|\widehat G-G\|_F^{1/2}.
\end{align}

Then, we  bound the norm difference $\|\widehat B_\lambda-B_\lambda\|_F$ using $\|G\|_2\le 2$. To do so, we expand
\[
\widehat B_\lambda-B_\lambda
=
(\widehat G^{1/2}-G^{1/2})S_\lambda \widehat G^{1/2}
+
G^{1/2}S_\lambda(\widehat G^{1/2}-G^{1/2}).
\]
Taking the Frobenius norm and using submultiplicativity inequalities
$\|AX\|_F\le \|A\|_2\|X\|_F$ and $\|XB\|_F\le \|B\|_2\|X\|_F$ result in the following inequality:
\begin{align}
\label{eq:B_frob_bound_app}
\|\widehat B_\lambda-B_\lambda\|_F
\le
\|S_\lambda\|_2 \big(\|\widehat G^{1/2}\|_2+\|G^{1/2}\|_2\big)\,\|\widehat G^{1/2}-G^{1/2}\|_F.
\end{align}
We now show that $\|\widehat G^{1/2}\|_2\le \sqrt{2}$ and $\|G^{1/2}\|_2\le \sqrt{2}$.

Since $G\succeq 0$, $\|G\|_2=\lambda_{\max}(G)\le \mathrm{Tr}(G)$.
Moreover,
\[
\mathrm{Tr}(G)=\mathbb{E}\,\mathrm{Tr}(z(X)z(X)^\top)=\mathbb{E}\|z(X)\|_2^2 \le 2
\]
As a result, $\|G\|_2\le 2$ and thus $\|G^{1/2}\|_2=\sqrt{\|G\|_2}\le \sqrt{2}$.
Similarly, we have $\widehat G\succeq 0$ and $\|\widehat G\|_2\le \mathrm{Tr}(\widehat G)$.
However, note that
\[
\mathrm{Tr}(\widehat G)=\frac1n\sum_{i=1}^n \mathrm{Tr}(z(X_i)z(X_i)^\top)=\frac1n\sum_{i=1}^n \|z(X_i)\|_2^2 \le 2
\]
which holds deterministically because each $\|z(X_i)\|_2^2\le 2$.
Therefore, $\|\widehat G\|_2\le 2$ and $\|\widehat G^{1/2}\|_2\le \sqrt{2}$.
Consequently, the following holds
\begin{align}
\label{eq:sqrt_sum_bound_app}
\|\widehat G^{1/2}\|_2+\|G^{1/2}\|_2 \le 2\sqrt{2}.
\end{align}

Then, we substitute \eqref{eq:sqrt_frob_app} and \eqref{eq:sqrt_sum_bound_app} into \eqref{eq:B_frob_bound_app}:
\[
\|\widehat B_\lambda-B_\lambda\|_F
\le
2\sqrt{2}\,\|S_\lambda\|_2\,D^{1/4}\,\|\widehat G-G\|_F^{1/2}.
\]
On the event \eqref{eq:G_frob_conc_app}, we have $\|\widehat G-G\|_F\le \eta_n(\delta)$ and hence
\[
\|\widehat B_\lambda-B_\lambda\|_F
\le
2\sqrt{2}\,\|S_\lambda\|_2\,D^{1/4}\,\eta_n(\delta)^{1/2}.
\]
Plugging in $\eta_n(\delta)$ from \eqref{eq:G_frob_conc_app} leads to
\[
\|\widehat B_\lambda-B_\lambda\|_F
\le
2\sqrt{2}\,\|S_\lambda\|_2\,D^{1/4}
\Bigl(
\frac{4}{\sqrt n}\Bigl(1+\sqrt{2\log\tfrac1\delta}\Bigr)
\Bigr)^{1/2}
=
8\sqrt{2}\,\|S_\lambda\|_2\,D^{1/4}\,n^{-1/4}\Bigl(1+\sqrt{2\log\tfrac1\delta}\Bigr)^{1/2}.
\]
Finally, note that the norm inequality $\|\cdot\|_2\le \|\cdot\|_F$ implies that
\[
\|\widehat B_\lambda-B_\lambda\|_2 \le \|\widehat B_\lambda-B_\lambda\|_F
\le 8\sqrt{2}\,\|S_\lambda\|_2\,D^{1/4}\,n^{-1/4}\Bigl(1+\sqrt{2\log\tfrac1\delta}\Bigr)^{1/2}.
\]
Knowing that $8\sqrt{2}< 12$, the proof for the matrix concentration is complete. Finally, because $B_\lambda$ and $\widehat B_\lambda$ are symmetric matrices, the rank-one Davis--Kahan $\sin\Theta$ bound applies as follows:
Given that $\gamma_\lambda:=\lambda_1(B_\lambda)-\lambda_2(B_\lambda)>0$ and $v_1,\widehat v_1$ are unit-norm top eigenvectors, then the following inequality holds
\[
\sin\angle(\widehat v_1,v_1)\le \frac{\|\widehat B_\lambda-B_\lambda\|_2}{\gamma_\lambda}.
\]
The proof is hence complete.

\subsection{Proof of Theorem~\ref{thm:mm_rff_eigenspace}}

We first prove the Frobenius concentration bound \eqref{eq:frob_bound_new} by applying a Hilbert-space Hoeffding inequality
directly to the random matrices (viewed as vectors under the Frobenius norm). We then derive the eigenspace bound
\eqref{eq:davis_kahan_bound_new} via Davis--Kahan.

Note that, by assumption, $\widetilde K=\frac{1}{r}\sum_{\ell=1}^r K^{(\ell)}$, where the matrices $K^{(\ell)}$ are i.i.d.,
$\mathbb{E}[K^{(\ell)}]=K$, and $|K^{(\ell)}_{ij}|\le 1$ almost surely for all $i,j$.
Define the normalized matrices
\[
A^{(\ell)} := \frac{1}{n}K^{(\ell)},\qquad
A := \frac{1}{n}K,\qquad
\widetilde A := \frac{1}{n}\widetilde K = \frac{1}{r}\sum_{\ell=1}^r A^{(\ell)}.
\]
Then $\mathbb{E}[A^{(\ell)}]=A$, and
\[
\widetilde A - A = \frac{1}{r}\sum_{\ell=1}^r\bigl(A^{(\ell)}-A\bigr).
\]
We work in the Hilbert space $(\mathbb{R}^{n\times n},\langle\cdot,\cdot\rangle_F)$ where $\|M\|=\|M\|_F$. We let $X_\ell := A^{(\ell)}-A$. Then, $\{X_\ell\}_{\ell=1}^r$ are i.i.d.\ random elements of this Hilbert space
with $\mathbb{E}[X_\ell]=0$. 
Next, we derive applicable upper-bounds on $\|X_\ell\|_F$ that hold with provable probability. Since $|K^{(\ell)}_{ij}|\le 1$ holds deterministically, we have the following for every index $\ell$,
\[
\|A^{(\ell)}\|_F^2
=
\sum_{i=1}^n\sum_{j=1}^n \Bigl(\frac{K^{(\ell)}_{ij}}{n}\Bigr)^2
\le
\sum_{i=1}^n\sum_{j=1}^n \frac{1}{n^2}
=
1,
\]
and hence $\|A^{(\ell)}\|_F\le 1$ almost surely.
Similarly, using $|K_{ij}|\le 1$ for all $i,j$,
\[
\|A\|_F^2
=
\sum_{i=1}^n\sum_{j=1}^n\Bigl(\frac{K_{ij}}{n}\Bigr)^2
\le
\sum_{i=1}^n\sum_{j=1}^n\frac{1}{n^2}
=
1,
\]
Therefore, $\|A\|_F\le 1$ holds.
Hence, using the triangle inequality, we can show
\[
\|X_\ell\|_F = \|A^{(\ell)}-A\|_F \le \|A^{(\ell)}\|_F + \|A\|_F \le 2
\]
Thus, the centered summands are uniformly bounded in norm by $L=2$.

We now apply the Hoeffding inequality for random vectors \cite{sutherland2018efficient}
to the i.i.d.\ sequence $\{X_\ell\}_{\ell=1}^r$, with $\ell_2$-norm upper-bound $L=2$.
This shows that for every $\delta>0$, the following holds with probability at least $1-\delta$:
\[
\Bigl\|\frac{1}{r}\sum_{\ell=1}^r X_\ell\Bigr\|_F
\le
\frac{2}{\sqrt r}\Bigl(1+\sqrt{2\log\tfrac1\delta}\Bigr).
\]
Recalling that $\frac{1}{r}\sum_{\ell=1}^r X_\ell = \widetilde A - A = \frac{1}{n}(\widetilde K-K)$, we obtain
\[
\frac{1}{n}\|\widetilde K-K\|_F
=
\|\widetilde A-A\|_F
\le
\frac{2}{\sqrt r}\Bigl(1+\sqrt{2\log\tfrac1\delta}\Bigr).
\]
Multiplying both sides by $n$ yields the claimed Frobenius bound \eqref{eq:frob_bound_new}.

Subsequently, note that both $K$ and $\widetilde K$ are symmetric matrices.
Let $U$ and $\widetilde U$ be the top-$q$ eigenspaces of $K$ and $\widetilde K$
(represented by $n\times q$ matrices with orthonormal columns), and assume the eigengap
$\Delta_q(K)=\lambda_q(K)-\lambda_{q+1}(K)>0$. A standard Davis--Kahan $\sin\Theta$ bound gives
\[
\big\|\sin\Theta(\widetilde U,U)\big\|_F
\le
\frac{\big\|\widetilde K-K\big\|_2}{\Delta_q(K)}.
\]
Finally, for any matrix $E$, $\big\|E\big\|_2\le \big\|E\big\|_F$. Applying this with $E=\widetilde K-K$ yields
\[
\big\|\sin\Theta(\widetilde U,U)\big\|_F
\le
\frac{\big\|\widetilde K-K\big\|_F}{\Delta_q(K)}.
\]
The above inequality completes the proof.

\section{Additional Numerical Results}

\subsection{Experiment Details}
\label{app:exp_details}

All experiments are conducted in the covariance-operator formulation of KODA. To ensure numerical stability and efficiency, the associated generalized eigenvalue problems are solved via Cholesky decomposition of the constrained covariance operator, following standard practice in kernel-based spectral methods.
Throughout all experiments, we adopt a Gaussian (RBF) kernel without other specifications. To enable scalable computation for both unimodal and multi-modal settings, kernel features are approximated using Random Fourier Features (RFF). Unless otherwise specified, we use $3{,}000$ random Fourier features to approximate each Gaussian kernel. 
The kernel bandwidth $\sigma$ is selected following common practice in the representation comparison literature.  We adopt the same bandwidth selection strategy as in prior work~\cite{pmlr-v235-zhang24bh}~\cite{pmlr-v267-jalali25a}. Specifically, for each pair of embeddings under comparison, we tune the kernel bandwidths such that the leading eigenvalues of the resulting kernel matrices are of comparable magnitude across models, ensuring that neither embedding dominates the optimization due to scale differences.
For the quadratic constraint in KODA, the threshold $\epsilon$ controls the degree of weak clustering enforced under the constrained embedding. Unless otherwise stated, we set $\epsilon$ to the $0.5$ quantile of the eigenvalues of the constrained model.
All experiments are performed on two NVIDIA RTX 4090 GPUs.

\subsection{Sanity Check of Unimodal Comparison}
\label{apps: sanity check}
\textbf{Visualizing Pairwise Discrepancies via Kernel Difference Heatmaps.}
\cref{figs: dogs_10_kernel_difference} shows normalized RBF kernel similarities induced by CLIP and DINOv2 on ImageWoof (ImageNet-1k dog breeds), together with their difference matrix.
The difference heatmap exhibits structured, high-magnitude regions, where darker values indicate stronger mismatch between the two embeddings under the same similarity metric.
These mismatches concentrate on specific breed-level relationships, suggesting that the discrepancies arise along meaningful semantic directions rather than random fluctuations.

\begin{figure*}[h]
\begin{center}
\centerline{\includegraphics[width=17cm]
{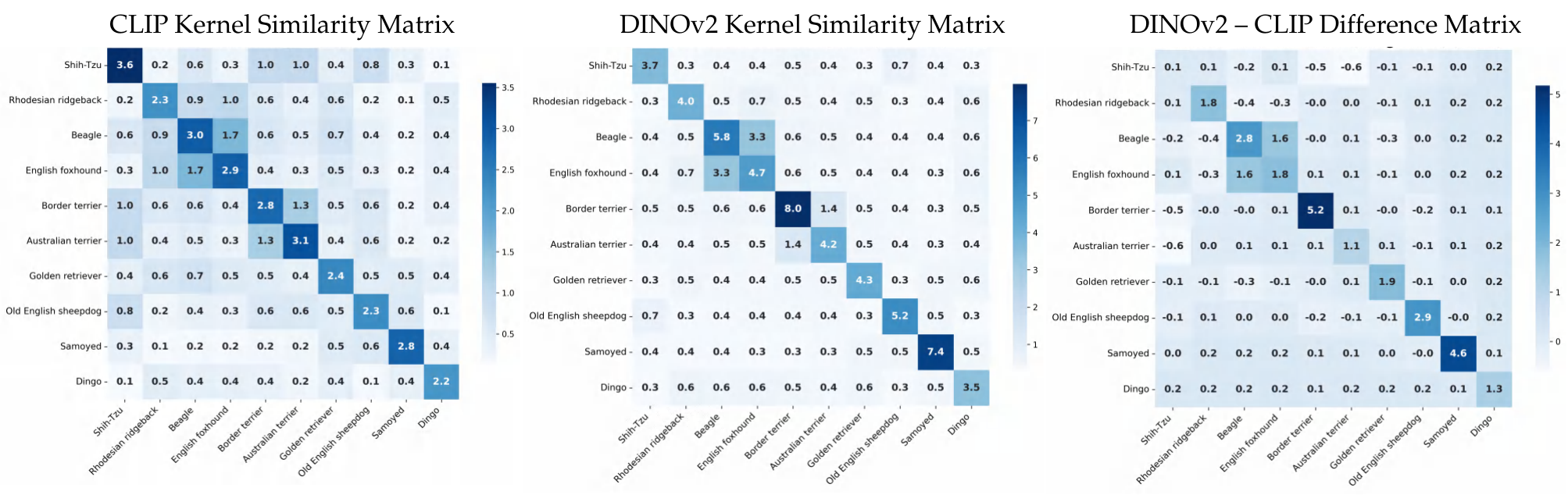}}
\caption{
Kernel similarity heatmaps induced by CLIP and DINOv2 on the ImageNet-1k dog breeds, together with their difference. (scaled by 100 for better visualization.)
}
\label{figs: dogs_10_kernel_difference}
\end{center}
\vskip -0.2in
\end{figure*}

\textbf{Identified Discrepancy Directions Consistent with Kernel Difference Structures.}
Based on the above normalized RBF kernel difference between the two embeddings, we identify the dog-breed categories associated with the largest aggregated pairwise mismatches as a ground-truth reference of semantic discrepancy. 
We then apply KODA to the same dataset without using any label information to discover discrepancy directions between the two embeddings.
For each discovered direction, we select the top-6 images for visualization. As shown in \cref{figs: kernel_heatmap_match}, the top discrepancy directions recovered by KODA correspond closely to the most mismatched dog-breed categories identified from the kernel difference matrix.

\begin{figure*}[t]
\begin{center}
\centerline{\includegraphics[width=17cm]
{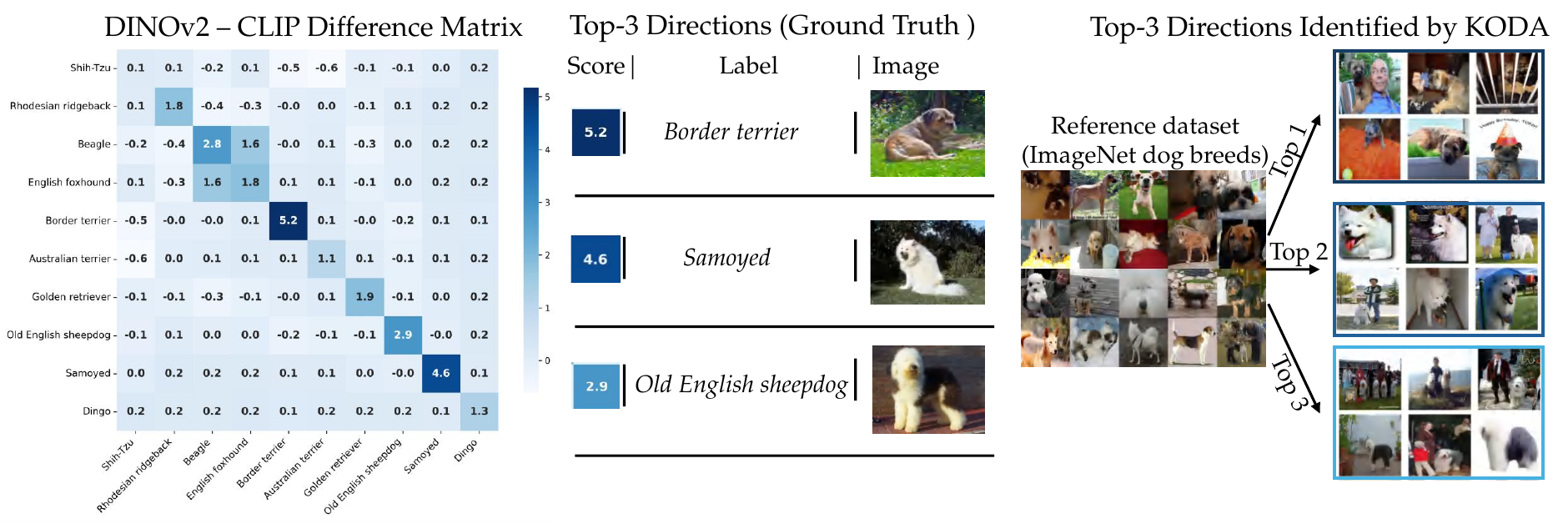}}
\caption{
Consistency between dominant kernel mismatches (ground truth) and discrepancy directions identified by KODA on ImageNet dog breeds.
\textbf{Left:} the kernel difference matrix between DINOv2 and CLIP computed using normalized RBF kernels.
\textbf{Middle:} the top-3 ground-truth dog breeds associated with the largest aggregated mismatch scores in the difference matrix, together with representative images.
\textbf{Right:} representative samples from the top-3 discrepancy directions identified by KODA without using label information.
}

\label{figs: kernel_heatmap_match}
\end{center}
\vskip -0.2in
\end{figure*}

\subsection{Additional Results on Unimodal Comparison}
\label{app:additional_unimodal}

We provide additional qualitative results for unimodal embedding comparison to complement the main experiments. In particular, we analyze the dominant discrepancy directions discovered by KODA between DINOv2 and CLIP embeddings on AFHQ~\cite{afhq} dataset. 
We consider both asymmetric comparison settings: (i) directions that are weakly clustered under CLIP while being strongly grouped under DINOv2, and (ii) directions that are weakly clustered under DINOv2 while being strongly grouped under CLIP. For each setting, we visualize the top discrepancy components obtained from KODA by inspecting the samples associated with the leading directions, as shown in the ~\cref{figs:afhq_dinov2-clip} and ~\cref{figs:afhq_clip-dinov2}. Also, we compare directions identified by KODA with those from the SPEC~\cite{pmlr-v267-jalali25a} baseline, using the same settings. We quantify the strength of a discrepancy direction $x$ using the generalized Rayleigh quotient $\frac{x^\top K_1 x}{x^\top K_2 x}$, where $K_1$ and $K_2$ are normalized RBF kernel matrices induced by the two embeddings.
Since $x^\top K x$ measures how strongly direction $x$ is expressed under kernel $K$, larger quotient values indicate stronger directional asymmetry, i.e., directions emphasized by $K_1$ but suppressed by $K_2$.
\cref{figs: RQ_afhq_compare} reports the quotient values of KODA's Top-1 direction and the averages over Top-3 and Top-5 directions across different quantiles, together with SPEC's Top-1 direction.
Across all constraint levels, KODA consistently achieves substantially larger quotient values; notably, at $q=0.1$, KODA's Top-1 direction is about $20\times$ stronger than SPEC, and even the Top-5 average remains clearly above SPEC throughout.
These additional results further demonstrate the ability of KODA to disentangle directional discrepancies that depend on the choice of reference embedding, even in unimodal settings.


\begin{figure*}
\vskip 0.2in
\begin{center}
\centerline{\includegraphics[width=15cm]
{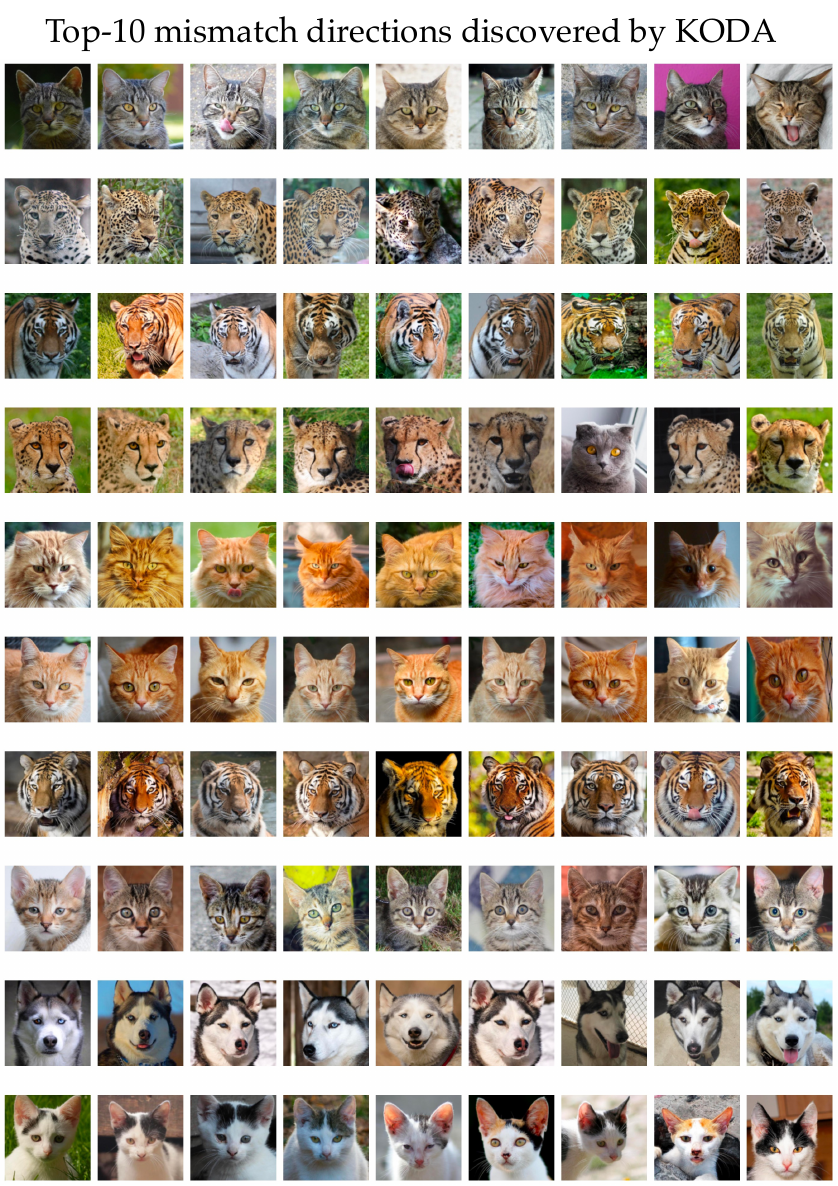}}
\caption{Top-10 DINOv2 dominant directions relative to CLIP on the AFHQ dataset identified by KODA, visualized via representative samples for each direction.}
\label{figs:afhq_dinov2-clip}
\end{center}
\vskip -0.2in
\end{figure*}

\begin{figure*}
\vskip 0.2in
\begin{center}
\centerline{\includegraphics[width=15cm]
{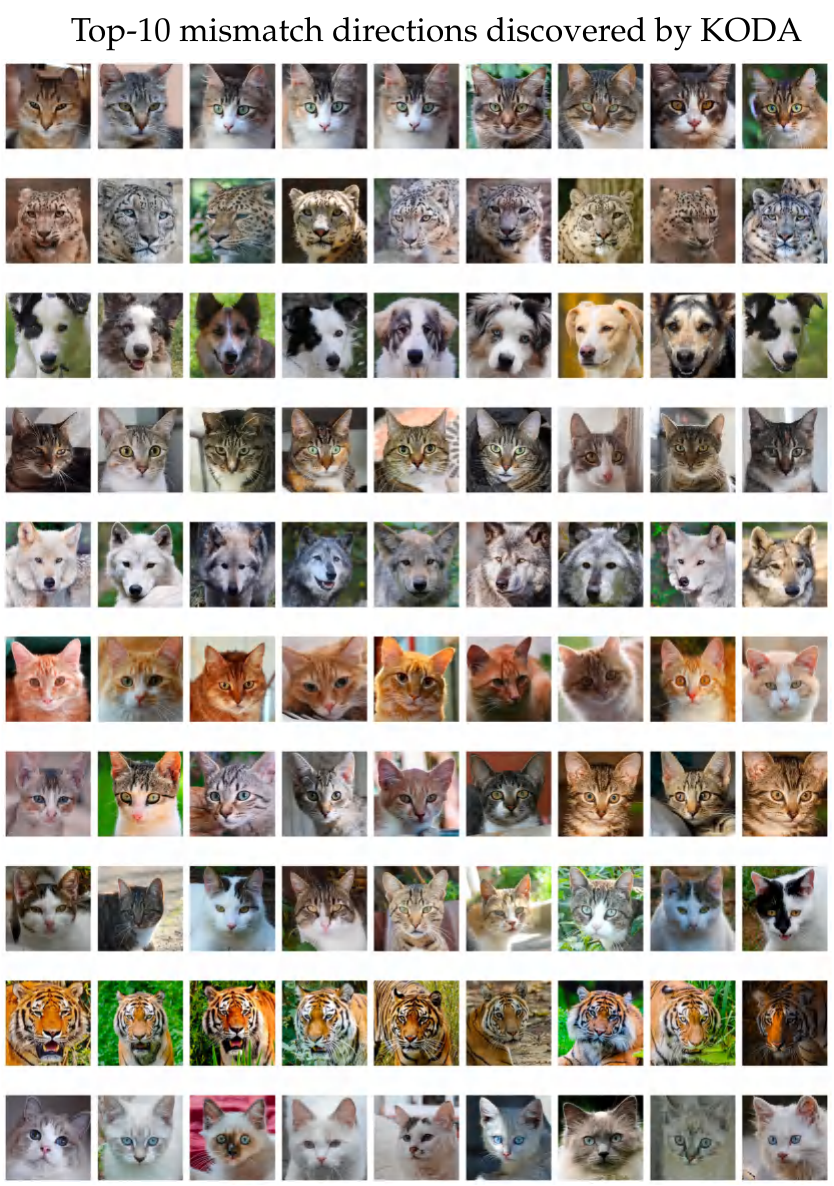}}
\caption{Top-10 CLIP dominant directions relative to DINOv2 on the AFHQ dataset identified by KODA, visualized via representative samples for each direction.}
\label{figs:afhq_clip-dinov2}
\end{center}
\vskip -0.2in
\end{figure*}


\clearpage

\begin{figure*}[t]
\begin{center}
\centerline{\includegraphics[width=17cm]
{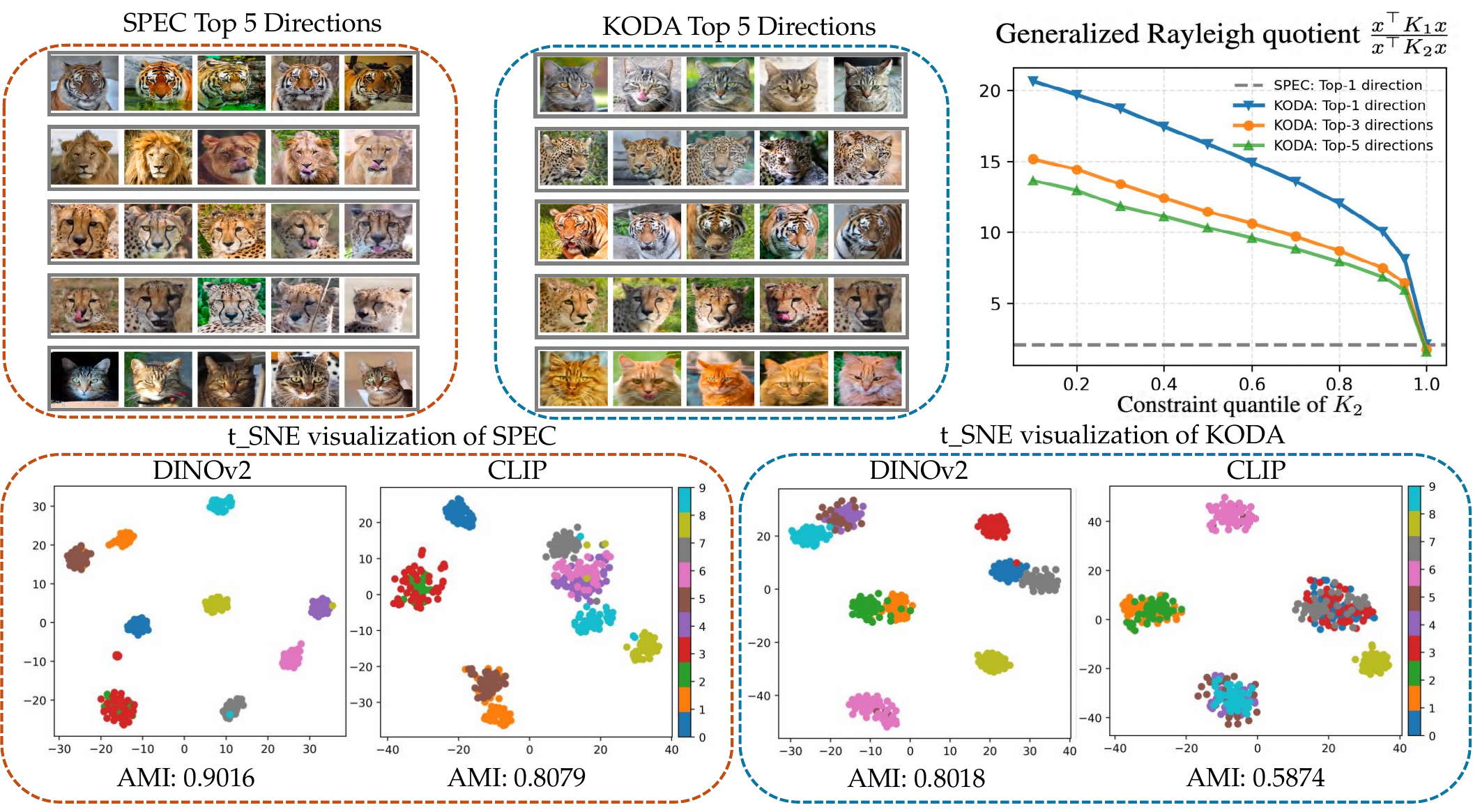}}
\caption{Left: Visualization of the top-5 mismatch directions of DINOv2 and CLIP on the AFHQ dataset discovered by KODA (ours) and SPEC (baseline), respectively. Right: Generalized Rayleigh quotient $\frac{x^\top K_1 x}{x^\top K_2 x}$ w.r.t. the constraint on $K_2$. (The quotient can be interpreted as a multiplicative measure of how strongly a given direction is represented in DINOv2 relative to CLIP.)}
\label{figs: RQ_afhq_compare}
\end{center}
\end{figure*}

\subsection{Additional Results on Multimodal Comparison}
\label{app:additional_multimodal}

We further provide additional results for multimodal embedding comparison on the MS-COCO dataset. In this setting, we analyze discrepancy directions discovered by KODA across a diverse set of vision--language models, including CLIP, OpenCLIP, BLIP, SigLIP, and SigLIP2.
For each pair of multimodal embeddings, we construct joint image--text kernels and apply KODA to identify dominant discrepancy directions under asymmetric constraints. We visualize the samples associated with the top discrepancy components by inspecting the image--text pairs corresponding to the leading directions, as shown in the \cref{figs: blip-clip}-\ref{figs: siglip-openclip}.


These visualizations highlight how different multimodal models organize paired image--text data in distinct ways, even when trained on similar objectives or datasets. Across different model combinations, the dominant discrepancy directions correspond to different subsets of samples, reflecting variations in how visual and textual information is jointly encoded. These additional results complement the main experiments by illustrating the generality of KODA across a wide range of multimodal embedding families.

\begin{figure*}
\begin{center}
\centerline{\includegraphics[width=17cm]
{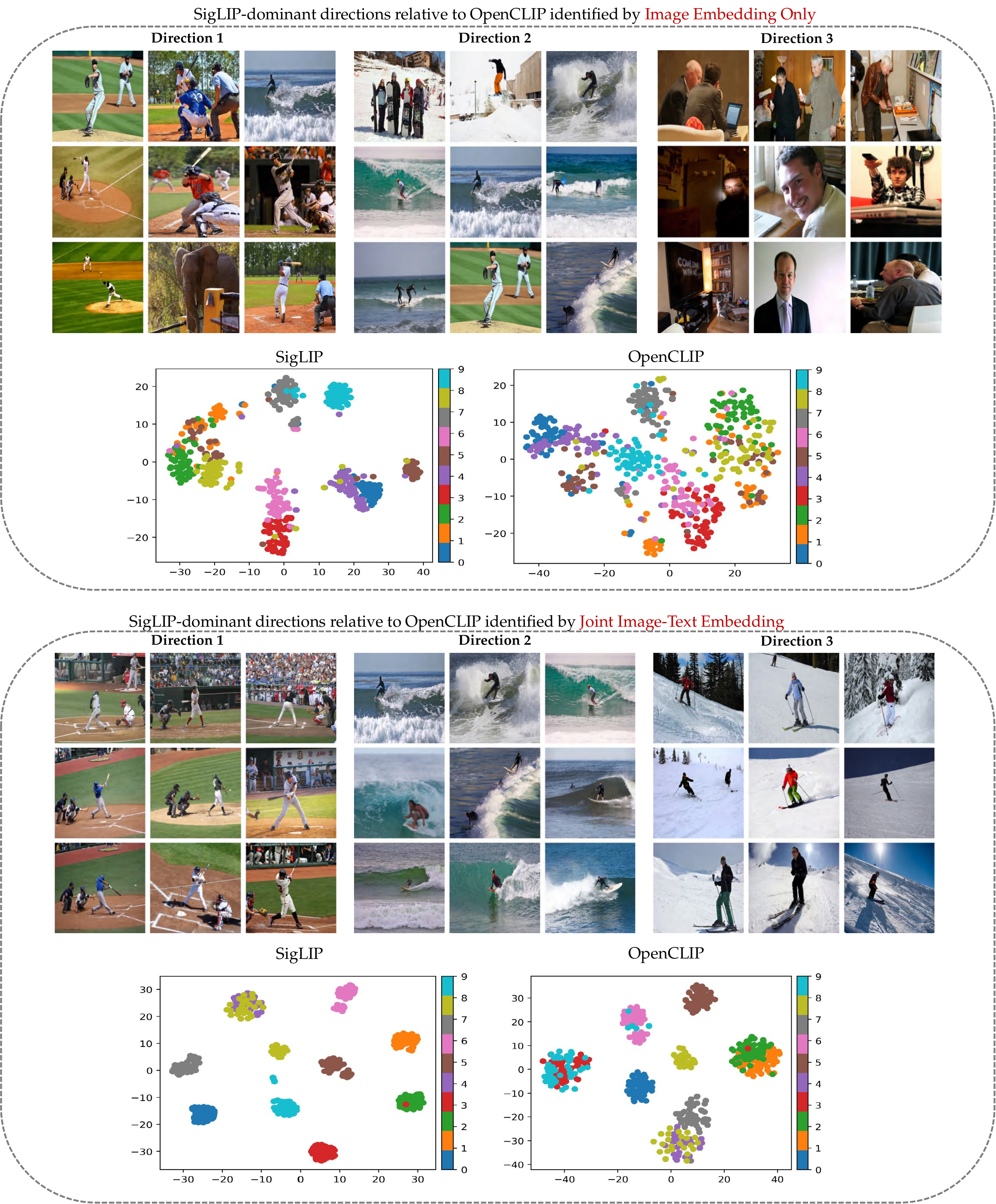}}
\caption{Multimodal discrepancy analysis of SigLIP dominant directions relative to OpenCLIP on the MSCOCO dataset.
}
\label{fig:multimodal_source_analysis}
\end{center}
\end{figure*}

\begin{figure*}
\begin{center}
\centerline{\includegraphics[width=17cm]
{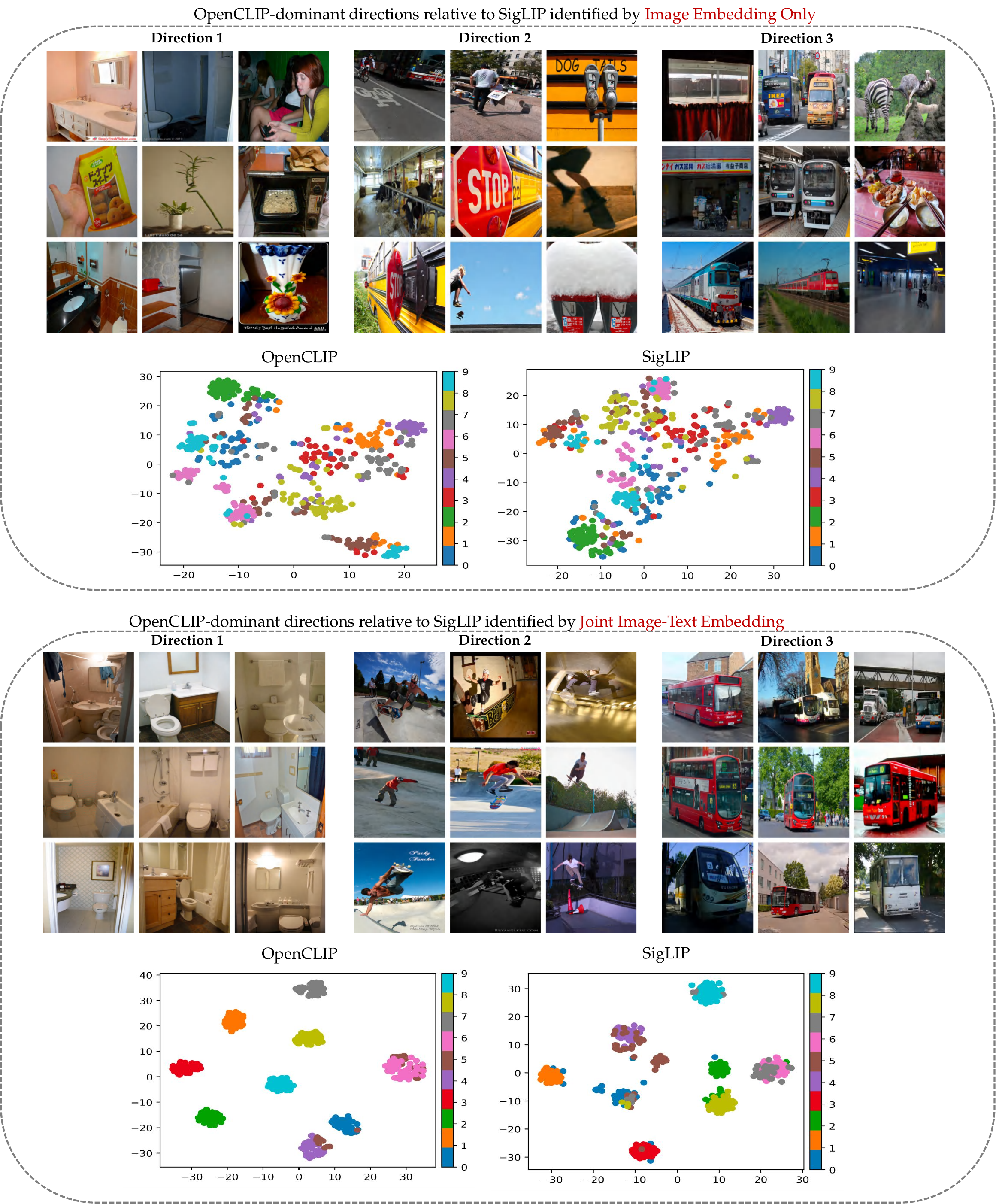}}
\caption{Multimodal discrepancy analysis of OpenCLIP dominant directions relative to SigLIP on the MSCOCO dataset.
}
\label{fig:multimodal_source_analysis_1}
\end{center}
\end{figure*}

\begin{figure*}
\begin{center}
\centerline{\includegraphics[width=17cm]
{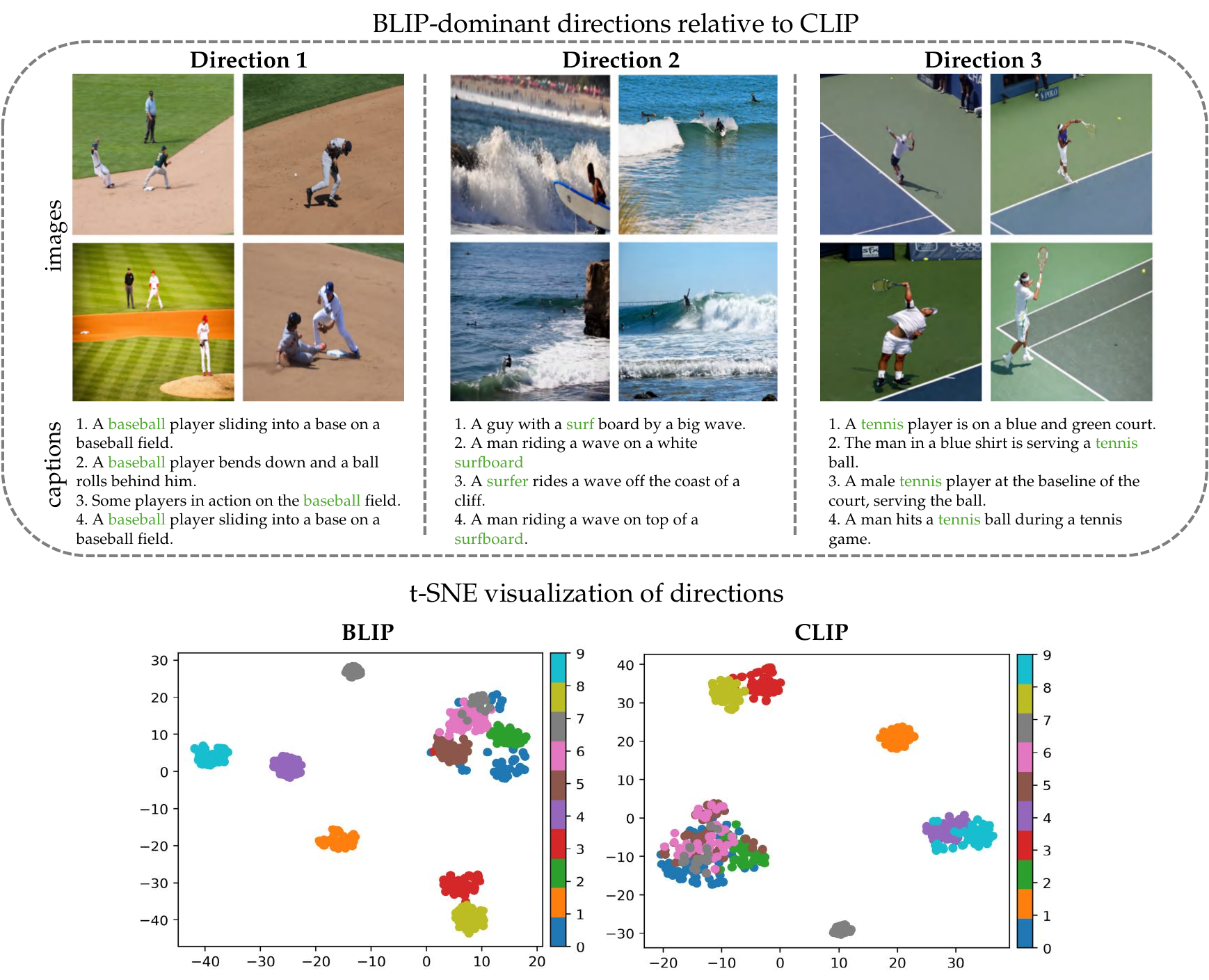}}
\caption{Multimodal discrepancy analysis of BLIP dominant directions relative to CLIP on the MSCOCO dataset.
\textbf{Top:} Representative image--caption pairs corresponding to the Top-3 discrepancy directions identified by KODA.
\textbf{Bottom:} t-SNE visualization of Top-10 discrepancy directions using BLIP and CLIP embeddings respectively. 
}
\label{figs: blip-clip}
\end{center}
\end{figure*}

\begin{figure*}
\begin{center}
\centerline{\includegraphics[width=17cm]
{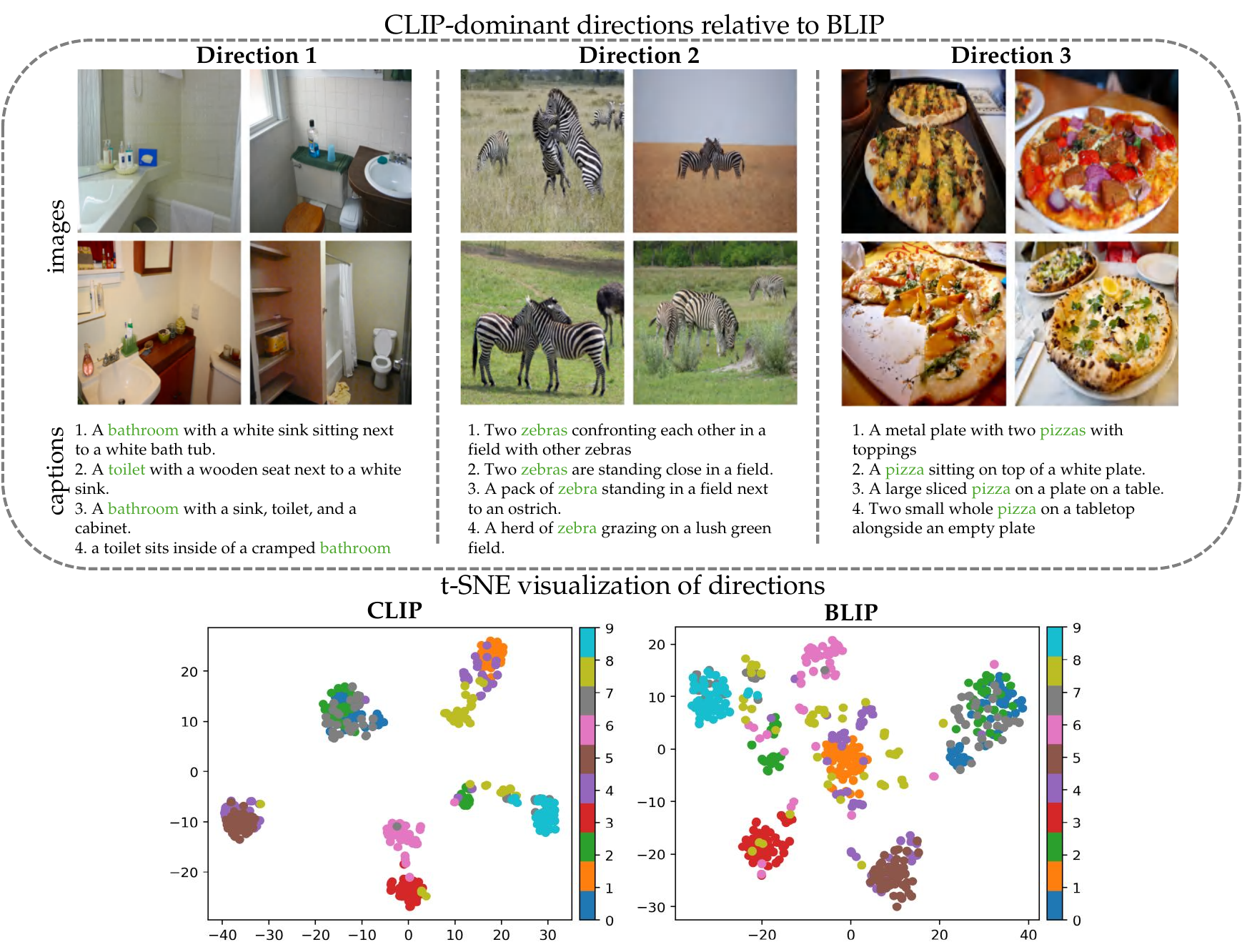}}
\caption{Multimodal discrepancy analysis of CLIP dominant directions relative to BLIP on the MSCOCO dataset.
\textbf{Top:} Representative image--caption pairs corresponding to the Top-3 discrepancy directions identified by KODA.
\textbf{Bottom:} t-SNE visualization of Top-10 discrepancy directions using CLIP and BLIP embeddings respectively. 
}
\label{figs: clip-blip}
\end{center}
\end{figure*}

\begin{figure*}
\begin{center}
\centerline{\includegraphics[width=17cm]
{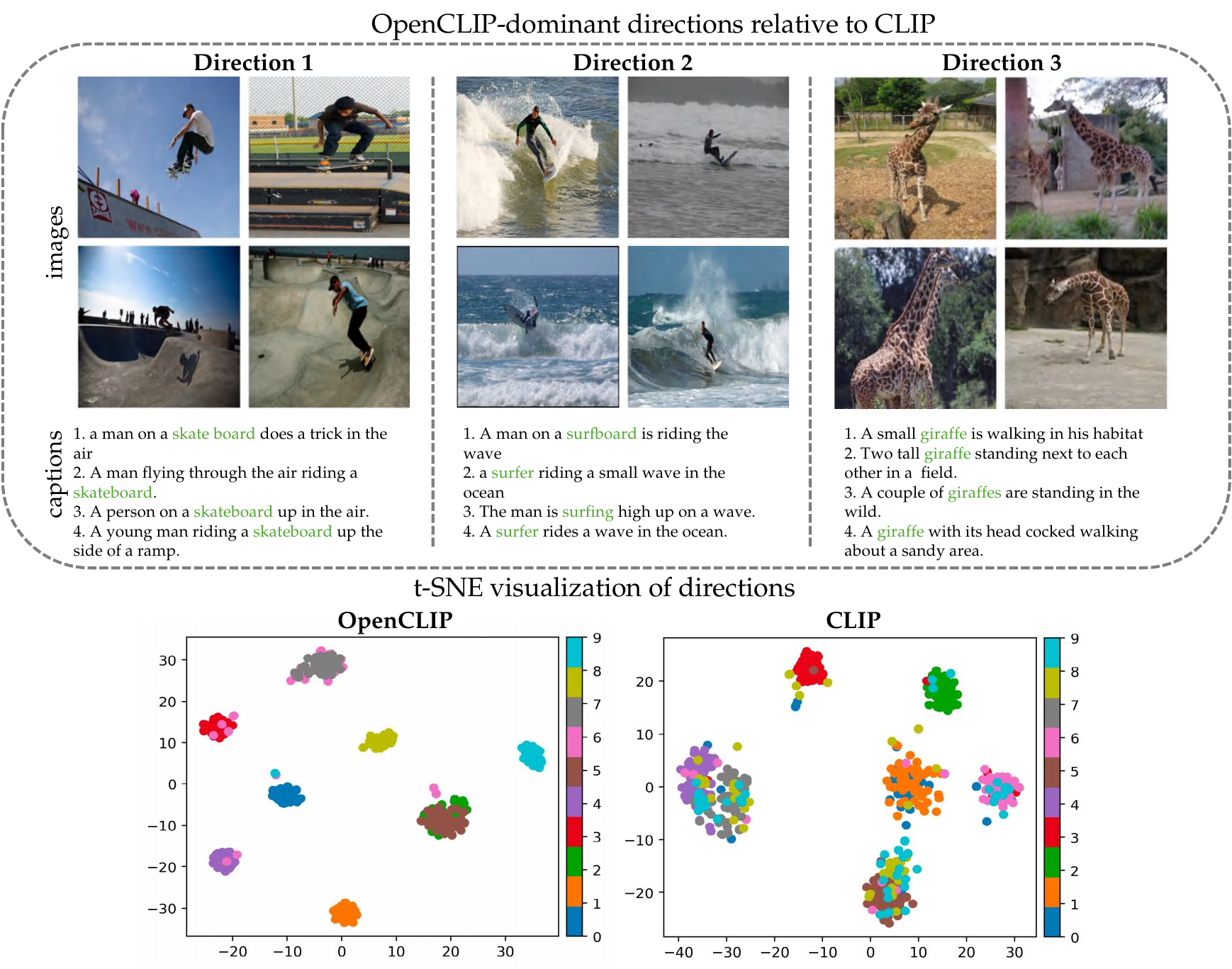}}
\caption{Multimodal discrepancy analysis of OpenCLIP dominant directions relative to CLIP on the MSCOCO dataset.
\textbf{Top:} Representative image--caption pairs corresponding to the Top-3 discrepancy directions identified by KODA.
\textbf{Bottom:} t-SNE visualization of Top-10 discrepancy directions using OpenCLIP and CLIP embeddings respectively. 
}
\label{figs: openclip-clip}
\end{center}
\end{figure*}

\begin{figure*}
\begin{center}
\centerline{\includegraphics[width=17cm]
{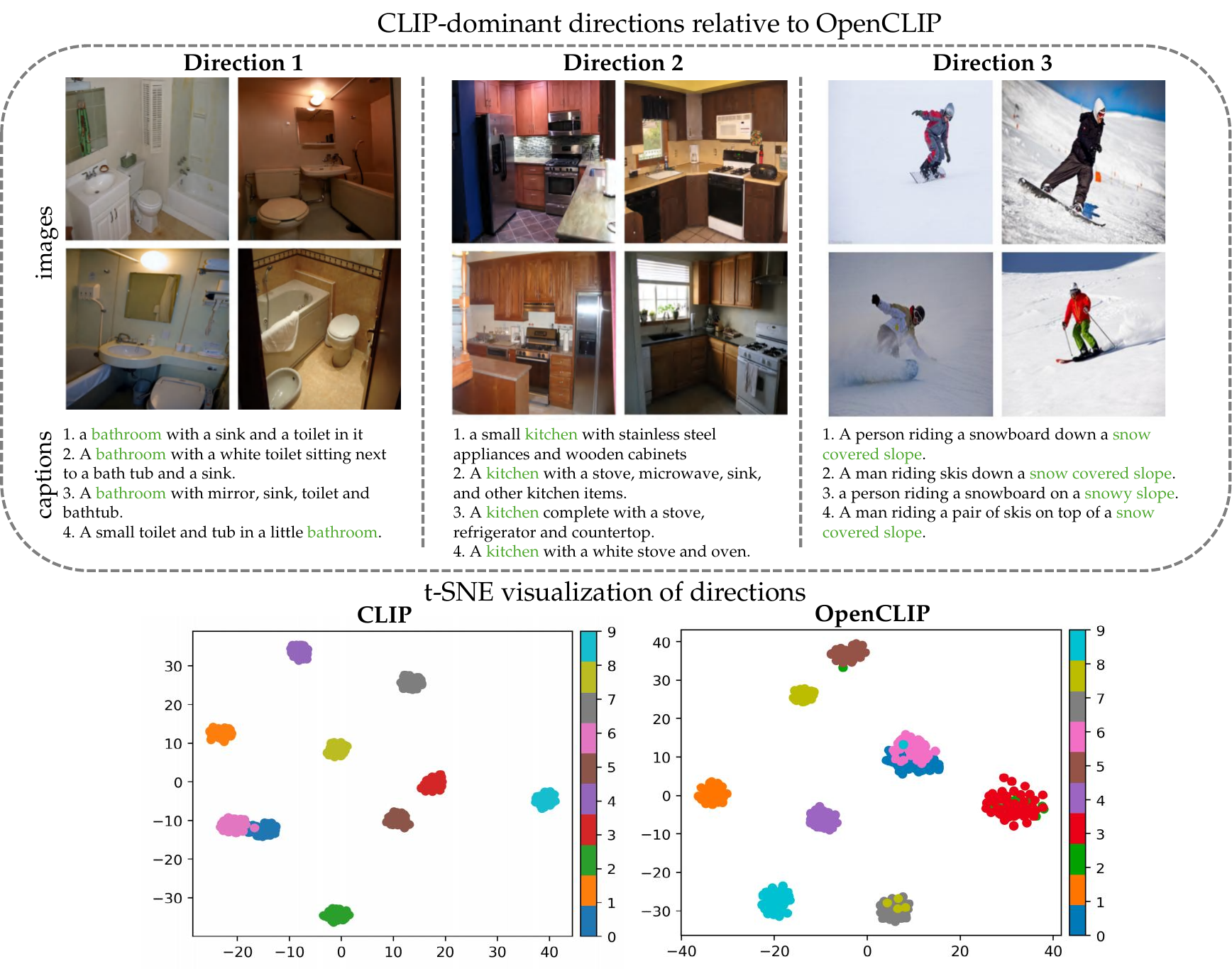}}
\caption{Multimodal discrepancy analysis of CLIP dominant directions relative to OpenCLIP on the MSCOCO dataset.
\textbf{Top:} Representative image--caption pairs corresponding to the Top-3 discrepancy directions identified by KODA.
\textbf{Bottom:} t-SNE visualization of Top-10 discrepancy directions using CLIP and OpenCLIP embeddings respectively. 
}
\label{figs: clip-openclip}
\end{center}
\end{figure*}

\begin{figure*}
\begin{center}
\centerline{\includegraphics[width=17cm]
{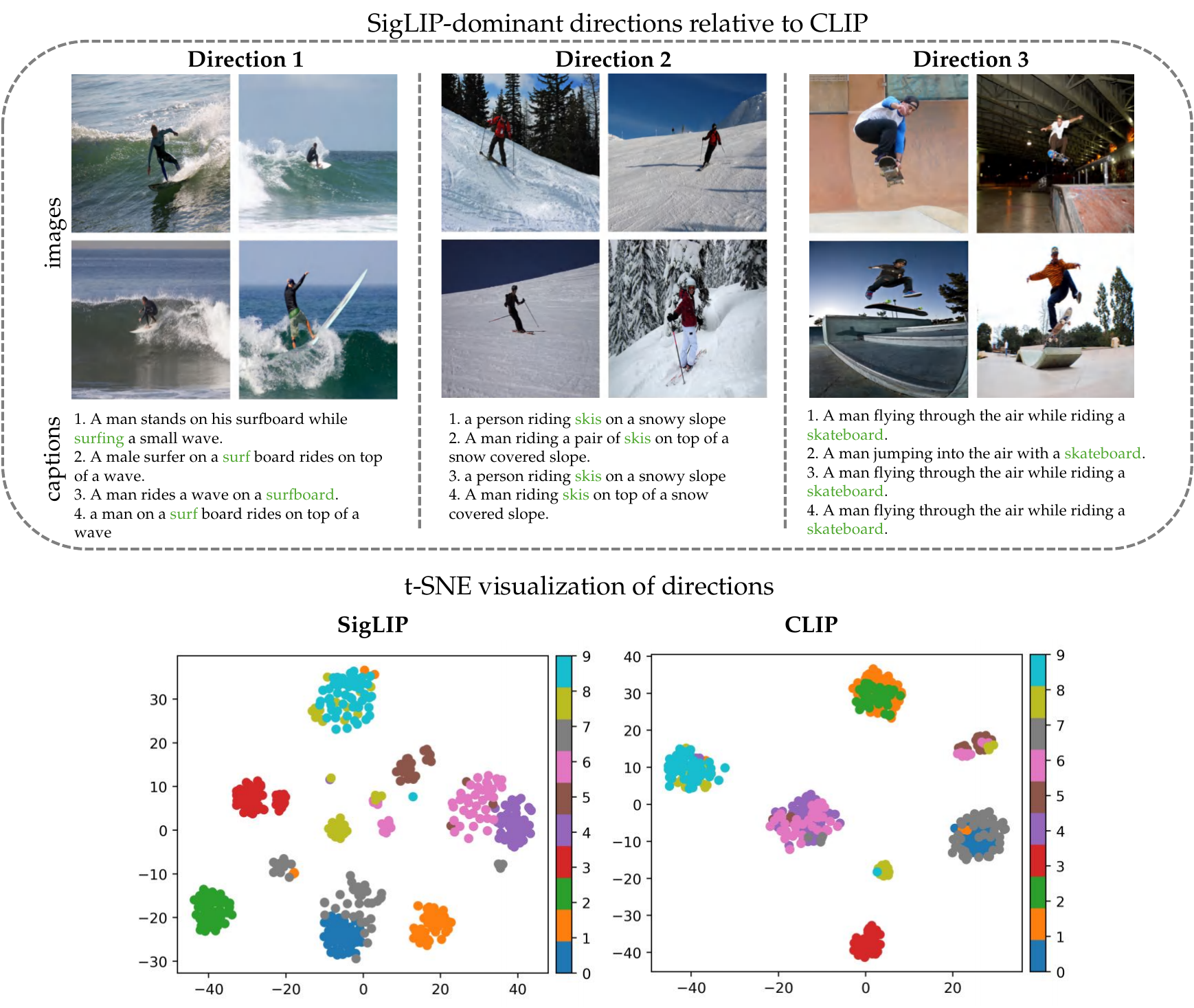}}
\caption{Multimodal discrepancy analysis of SigLIP dominant directions relative to CLIP on the MSCOCO dataset.
\textbf{Top:} Representative image--caption pairs corresponding to the Top-3 discrepancy directions identified by KODA.
\textbf{Bottom:} t-SNE visualization of Top-10 discrepancy directions using SigLIP and CLIP embeddings respectively. 
}
\label{figs: siglip-clip}
\end{center}
\end{figure*}

\begin{figure*}
\begin{center}
\centerline{\includegraphics[width=17cm]
{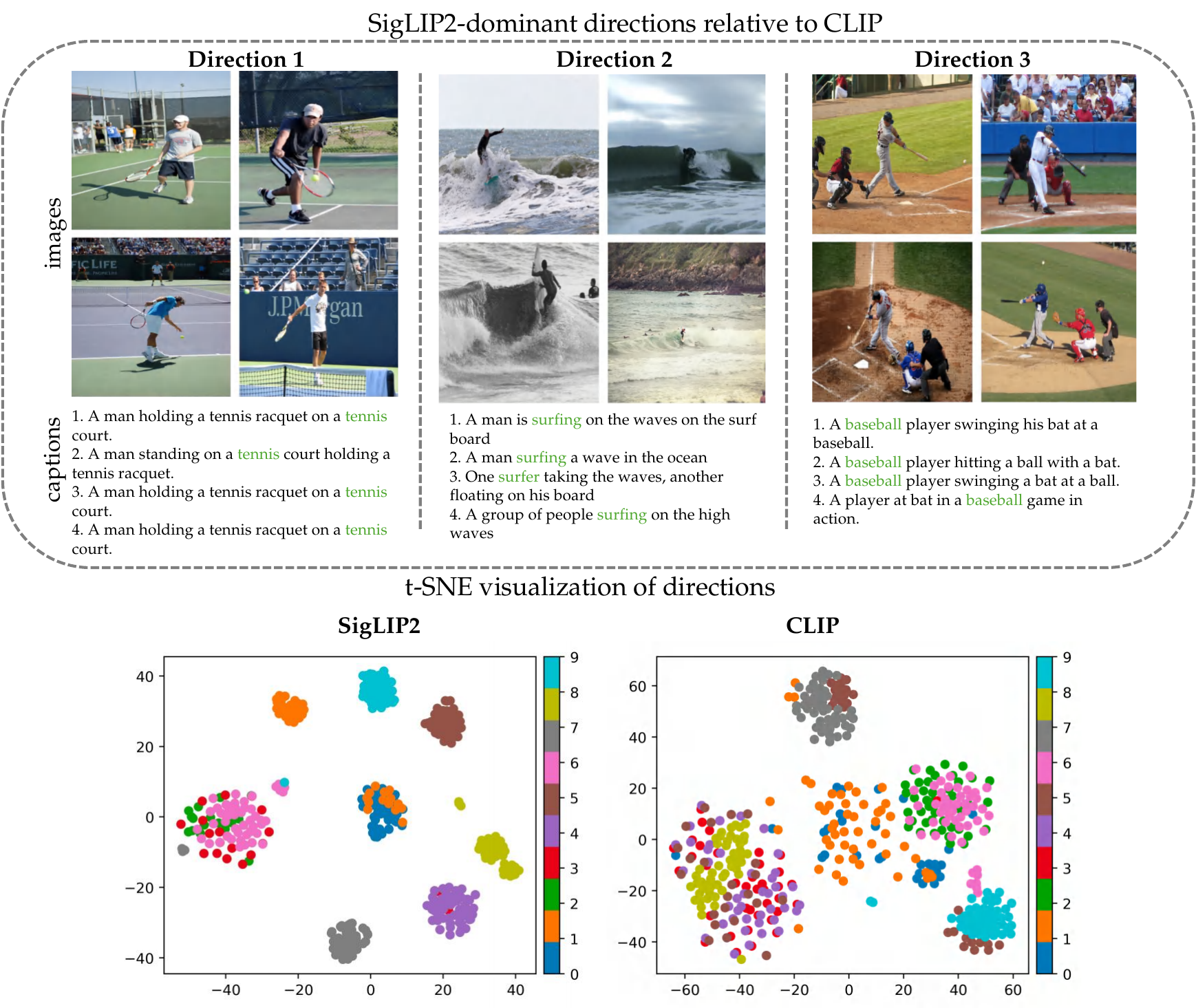}}
\caption{Multimodal discrepancy analysis of SigLIP2 dominant directions relative to CLIP on the MSCOCO dataset.
\textbf{Top:} Representative image--caption pairs corresponding to the Top-3 discrepancy directions identified by KODA.
\textbf{Bottom:} t-SNE visualization of Top-10 discrepancy directions using SigLIP2 and CLIP embeddings respectively. 
}
\label{figs: siglip2-clip}
\end{center}
\end{figure*}

\begin{figure*}
\begin{center}
\centerline{\includegraphics[width=17cm]
{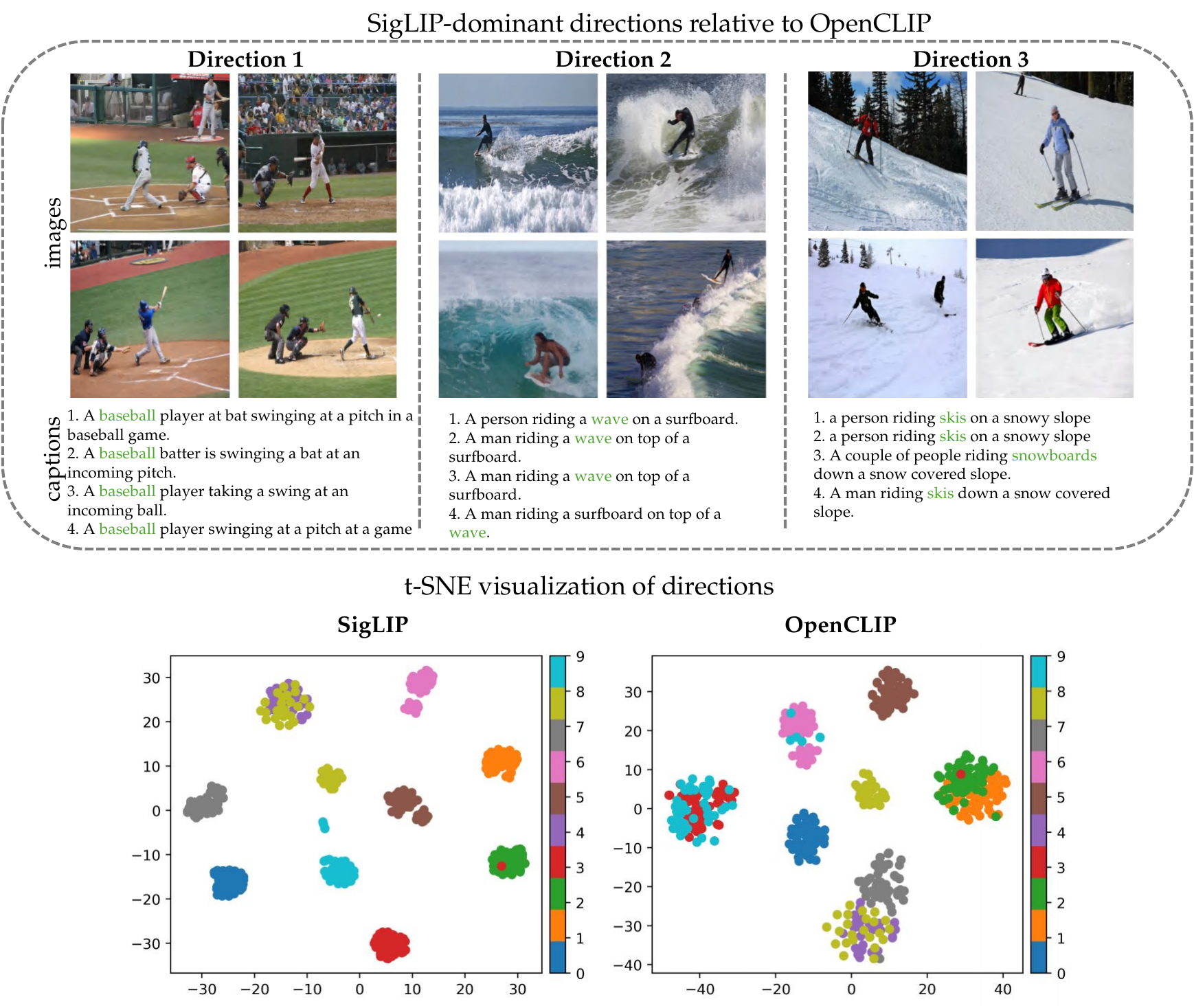}}
\caption{Multimodal discrepancy analysis of SigLIP dominant directions relative to OpenCLIP on the MSCOCO dataset.
\textbf{Top:} Representative image--caption pairs corresponding to the Top-3 discrepancy directions identified by KODA.
\textbf{Bottom:} t-SNE visualization of Top-10 discrepancy directions using SigLIP and OpenCLIP embeddings respectively. 
}
\label{figs: siglip-openclip}
\end{center}
\end{figure*}

\clearpage
\subsection{Ablation Study}
\label{app:ablation}

We conduct a set of ablation studies to examine the sensitivity of KODA to key design choices, including the number of random Fourier features, the size of the reference sample set, and the choice of kernel function. All ablation experiments are conducted using the same experimental protocol as in the main results, with only the specified factor varied while keeping other settings fixed.

\textbf{Number of Random Fourier Features.}
We examine the effect of the number of random Fourier features used to approximate the Gaussian kernel by varying the feature dimensionality $r \in \{500, 1000, 2000, 3000\}$. For each setting, we visualize the dominant discrepancy directions discovered by KODA in \cref{figs: ablation_dim}.
As the number of random features increases, the discovered discrepancy directions become progressively more coherent and visually well-separated. In particular, higher-dimensional approximations lead to cleaner and more stable grouping patterns, while lower-dimensional approximations exhibit increased noise in the dominant directions. These results indicate that sufficiently rich random feature approximations are beneficial for stable discrepancy discovery, and motivate our choice of $r = 3000$ in the main experiments.

\textbf{Kernel Function.}
We investigate the effect of the kernel function by comparing Gaussian (RBF) kernels with cosine similarity kernels. For each kernel choice, we apply KODA using the same constraint setting and visualize the dominant discrepancy directions.
As \cref{figs: ablation_kernel} shows, different kernel functions lead to different discrepancy patterns, reflecting the distinct geometric properties emphasized by each kernel. In particular, Gaussian kernels capture local neighborhood structure based on Euclidean distance, whereas cosine similarity kernels emphasize angular relationships between representations. As a result, the dominant discrepancy directions discovered under different kernels correspond to different groupings of samples.  

\textbf{Reference Sample Size.}
We study the effect of the reference sample size by varying the number of samples $n \in \{2000, 4000, 8000, 16000\}$ while keeping all other settings fixed. For each choice of $n$, we apply KODA and visualize the dominant discrepancy directions in \cref{figs: ablation_size}.
As the sample size increases, the discovered discrepancy directions become increasingly stable and consistent. In particular, when $n = 16{,}000$, the resulting discrepancy components exhibit highly stable grouping patterns, indicating that sufficient reference coverage is important for reliable discrepancy discovery. At smaller sample sizes, the overall structure of the discrepancy directions is preserved, albeit with increased variability in the visualizations.
Notably, performing spectral decomposition directly on kernel matrices of size exceeding $10{,}000 \times 10{,}000$ is often impractical on modern GPUs due to memory and computational constraints. By operating in the covariance space induced by random feature representations, KODA reduces the effective dimensionality of the spectral problem to the feature dimension (e.g., $6{,}000$ in our experiments), making eigen-decomposition feasible even when the number of reference samples is large. This formulation enables stable discrepancy analysis at larger sample sizes without requiring explicit construction or decomposition of full kernel matrices.

\begin{figure*}
\begin{center}
\centerline{\includegraphics[width=13cm]
{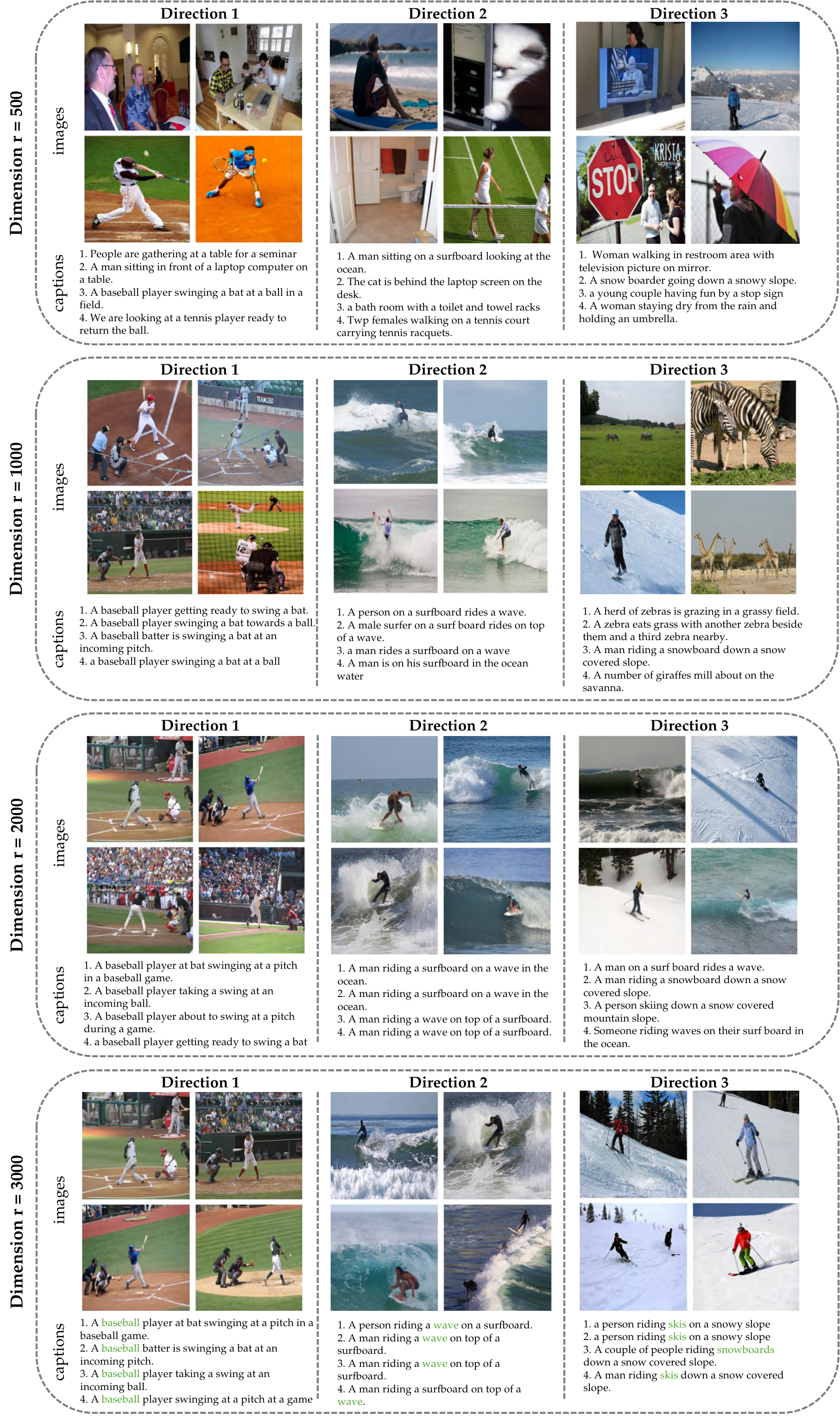}}
\caption{Multimodal discrepancy analysis of SigLIP dominant directions relative to OpenCLIP on the MSCOCO dataset under different number of joint random fourier features.
}
\label{figs: ablation_dim}
\end{center}
\end{figure*}

\begin{figure*}
\begin{center}
\centerline{\includegraphics[width=17cm]
{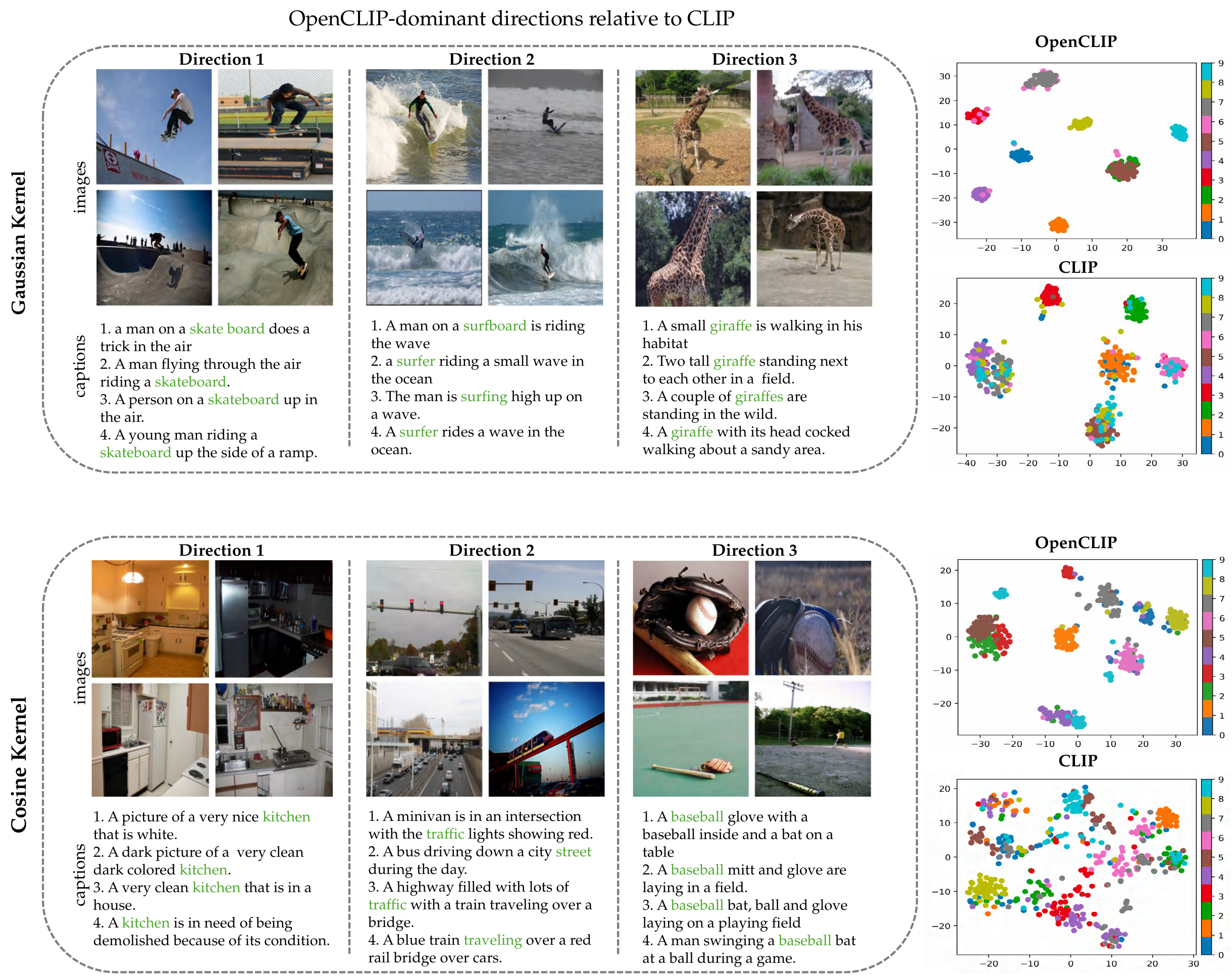}}
\caption{Multimodal discrepancy analysis of OpenCLIP dominant directions relative to CLIP on the MSCOCO dataset under gaussian kernel function or cosine kernel function.
}
\label{figs: ablation_kernel}
\end{center}
\end{figure*}

\begin{figure*}
\begin{center}
\centerline{\includegraphics[width=13cm]
{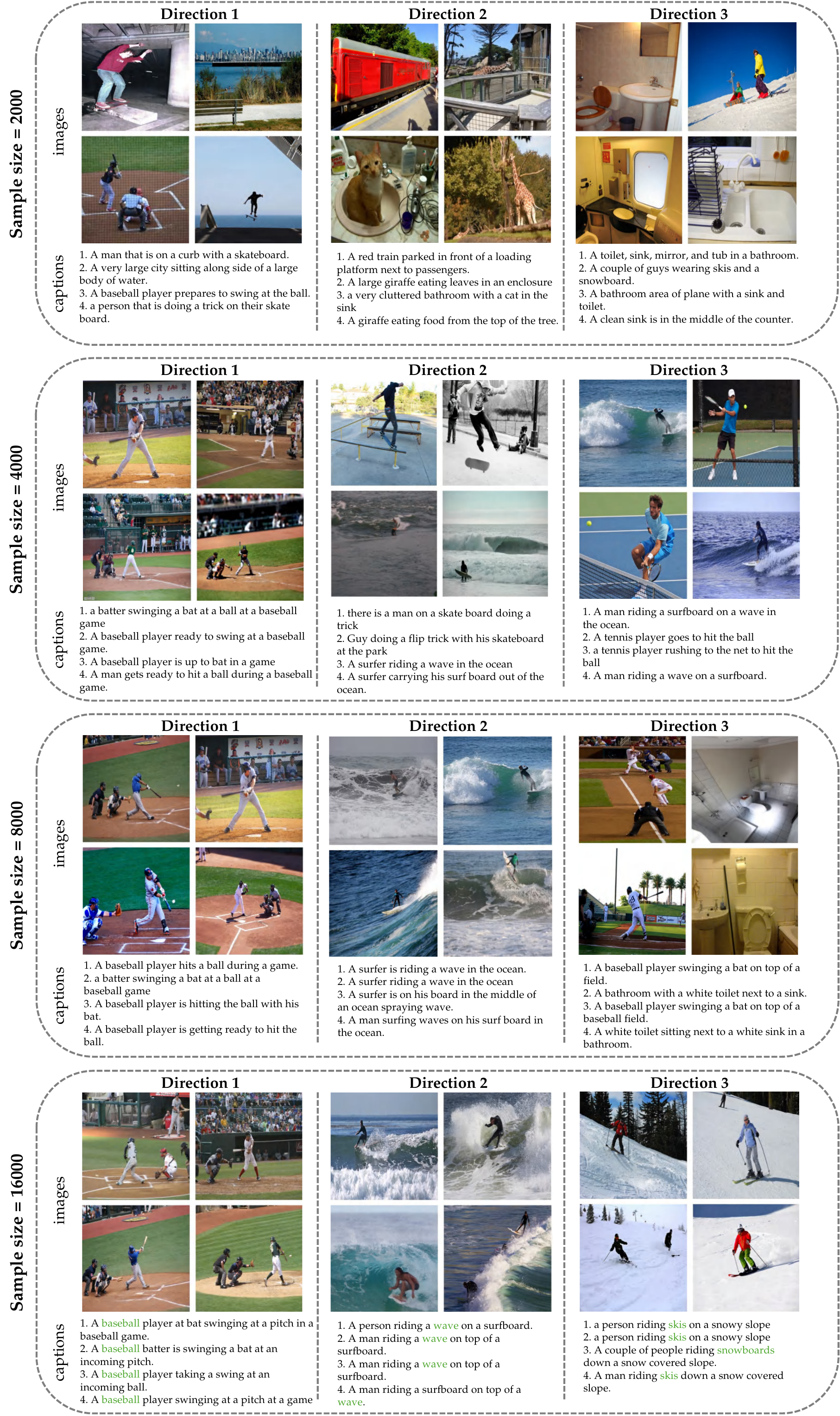}}
\caption{Multimodal discrepancy analysis of SigLIP dominant directions relative to OpenCLIP on the MSCOCO dataset under different number of sample size.
}
\label{figs: ablation_size}
\end{center}
\end{figure*}


\end{document}